\newtheorem{thm}{Theorem}[section]
\newtheorem{cor}[thm]{Corollary}
\newtheorem{prop}[thm]{Proposition}
\newtheorem{lem}[thm]{Lemma}
\newtheorem{definition}[thm]{Definition}
\newtheorem{rem}[thm]{Remark}
\renewcommand{\xi}{{\xx}^{(m)}}
\newcommand{\prob}{\mathbb{P}}
\newcommand{\cA}{{\cal A}}
\newcommand{\cB}{\mathcal{B}}
\newcommand{\cP}{\mathcal{P}}
\newcommand{\cL}{\mathcal{L}}
\newcommand{\cO}{\mathcal{O}}
\newcommand{\cW}{\mathcal{W}}
\newcommand{\cH}{\mathcal{H}}
\newcommand{\needcite}[1]{}
\newcommand{\cS}{{\cal S}}
\newcommand{\cX}{{\cal X}}
\newcommand{\be}{\begin{equation}}
\newcommand{\ee}{\end{equation}}
\newcommand{\benn}{\begin{equation*}}
\newcommand{\eenn}{\end{equation*}}
\newcommand{\bea}{\begin{eqnarray*}}
\newcommand{\eea}{\end{eqnarray*}}
\newcommand{\bean}{\begin{eqnarray}}
\newcommand{\eean}{\end{eqnarray}}
\newcommand{\ww}{\boldsymbol{w}} 
\newcommand{\xx}{\boldsymbol{x}}
\newcommand{\zz}{\boldsymbol{z}}
\newcommand{\thth}{\boldsymbol{\theta}}
\newcommand{\vv}{\boldsymbol{v}}
\newcommand{\uu}{\boldsymbol{u}} 
\newcommand{\pp}{\boldsymbol{p}}
\newcommand{\oo}{\boldsymbol{o}}
\newcommand{\cD}{{\cal D}}
\renewcommand{\aa}{\boldsymbol{a}}
\newcommand{\ignore}[1]{}
\newcommand{\polyring}[1]{\reals\left[x_1,\ldots,x_n\right]}
\renewcommand{\eqref}[1]{Eq.~\ref{#1}}
\newcommand{\figref}[1]{Fig.~\ref{#1}}
\newcommand{\epsmax}{\epsilon_{\textup{max}}}
\newcommand{\epsmin}{\epsilon_{\textup{min}}}
\newcommand{\reals}{\mathbb{R}}
\newcommand{\wvec}[1]{\ww^{(#1)}}
\newcommand{\wmat}[1]{W^{(#1)}}
\newcommand{\jit}[1]{j^{(#1)}}
\newcommand{\pwvec}[1]{\pp^{(#1)}}
\newcommand{\ai}[1]{a^{(#1)}}
\newcommand{\aai}[1]{\aa^{(#1)}}
\newcommand{\uvec}[1]{\uu^{(#1)}}
\newcommand{\cnnth}[1]{N_{\textup{CNN}}[#1]}
\newcommand{\fcth}[1]{N_{\textup{FC}}[#1]}
\newcommand{\cnn}{N_{\textup{CNN}}}
\newcommand{\hcnn}{\mathcal{H}_{\textup{CNN}}}
\newcommand{\layeralg}{\text{LW}_{\text{CNN}}}
\newcommand\numberthis{\addtocounter{equation}{1}\tag{\theequation}}
\DeclareMathOperator*{\argmax}{arg\,max}
\DeclareMathOperator*{\argmin}{arg\,min}
\newcommand{\commenteps}[1]{}
\newcommand{\beps}[2]{\cB_{#1}^{#2}}
\icmltitlerunning{An Optimization and Generalization Analysis for Max-Pooling Networks}
\begin{document}




\twocolumn[
    \icmltitle{An Optimization and Generalization Analysis for Max-Pooling Networks}




\begin{icmlauthorlist}
\icmlauthor{Alon Brutzkus}{tau}
\icmlauthor{Amir Globerson}{tau}
\end{icmlauthorlist}

\icmlaffiliation{tau}{The Blavatnik School of Computer Science, Tel Aviv University}

\icmlcorrespondingauthor{Alon Brutzkus}{alonbrutzkus@mail.tau.ac.il}

\icmlkeywords{Machine Learning, ICML}

\vskip 0.3in
]

\printAffiliationsAndNotice{}




\begin{abstract}
    Max-Pooling operations are a core component of deep learning architectures. In particular, they are part of most convolutional architectures used in machine vision, since pooling is a natural approach to pattern detection problems. However, these architectures are not well understood from a theoretical perspective. For example, we do not understand when they can be globally optimized, and what is the effect of over-parameterization on generalization. Here we perform a theoretical analysis of a convolutional max-pooling architecture, proving that it can be globally optimized, and can generalize well even for highly over-parameterized models. Our analysis focuses on a data generating distribution inspired by pattern detection problem, where a "discriminative" pattern needs to be detected among "spurious" patterns. We empirically validate that CNNs significantly outperform fully connected networks in our setting, as predicted by our theoretical results.
\end{abstract}

\section{Introduction}
\label{sec:intro}




Convolutional neural networks (CNNs) have achieved remarkable performance in various computer vision tasks \citep{krizhevsky2012imagenet, xu2015show, taigman2014deepface}. Such networks typically combine convolution and max-pooling layers, and can thus be used for detecting complex patterns in the input. In practice, CNNs typically have more parameters than needed to achieve zero train error (i.e., are overparameterized). Despite the potential problem of non-convexity in optimization and overfitting because of overparameterization, training these models with gradient based methods leads to solutions with low test error. Furthermore, overparameterized CNNs significantly outperform fully connected networks (FCNs) on classifying image data \citep{malach2020computational}. Thus, a key question immediately arises:

\textit{Why do overparameterized CNNs generalize well on image data and outperform FCNs?}

To the best of our knowledge, this question remains largely unanswered. We note that the question contains two significant challenges: the first is to show that minimization of the non-convex training loss leads to high training accuracy (where non-convexity is a result of both max-pooling and ReLU activations), and the other is that over-fitting is avoided despite over-parameterization. The latter challenge is known as the question of inductive bias of gradient descent \citep{ZhangBHRV16}, and understanding it is a key goal of deep learning theory.

In this work, we provide the first results which address the above question. We theoretically analyze learning a simplified pattern recognition task with overparameterized CNNs and overparameterized FCNs. We consider a CNN with a convolution layer, max pooling and fully connected layer and compare it to a one-hidden layer non-linear FCN. Figure \ref{fig:intro} shows an example of our setup. We summarize our contributions as follows:
\begin{enumerate}
    \item \textbf{Expressive Power of CNNs with max-pooling:} We prove a novel VC dimension lower bound in our setting which is exponential in $d$, the filter dimension of the CNN. This result implies that there exists ERM algorithms which have sample complexity which is exponential in $d$ in our setting.
    \item \textbf{Optimization and Generalization for learning CNNs with max-pooling:} We analyze learning overparamaterized CNNs with a layerwise gradient descent optimizer. We show that the algorithm converges to zero training loss and the learning has a sample complexity of $O(d)$. This is despite the above VC result, which shows that general ERM optimizers can potentially overfit. In our proof, we analyze the dynamics of training the first layer. We show that it induces a representation in the last layer which is separable with large margin and thus implies a good generalization guarantee.
    \item \textbf{Generalization of FCNs:} We apply recent results of \citet{brutzkus2018sgd} which show a generalization bound for overparameterized FC networks that is independent of the network size. We prove that in our setting, their bound can be at best $O(d^{2r})$ for $r \ge 1$, and can thus be much larger than the sample complexity we derive for the CNN.
    \item \textbf{Empirical Evaluation:} We empirically validate our theoretical results. We show that CNNs generalize well and significantly outperform FCNs in our setting as predicted by our theory. We empirically confirm that this holds also for several extensions of our setup.
\end{enumerate}

Our results make a significant headway on the challenging problem of understanding why overparameterized CNNs can generalize better than overparameterized FCNs on image classification tasks. In particular, to the best of our knowledge, we provide the first optimization and generalization results for overparameterized CNNs with max pooling.

\begin{figure*}[t!]
	\begin{subfigure}{.46\textwidth}
		\includegraphics[width=1.0\linewidth,height=4.7cm]{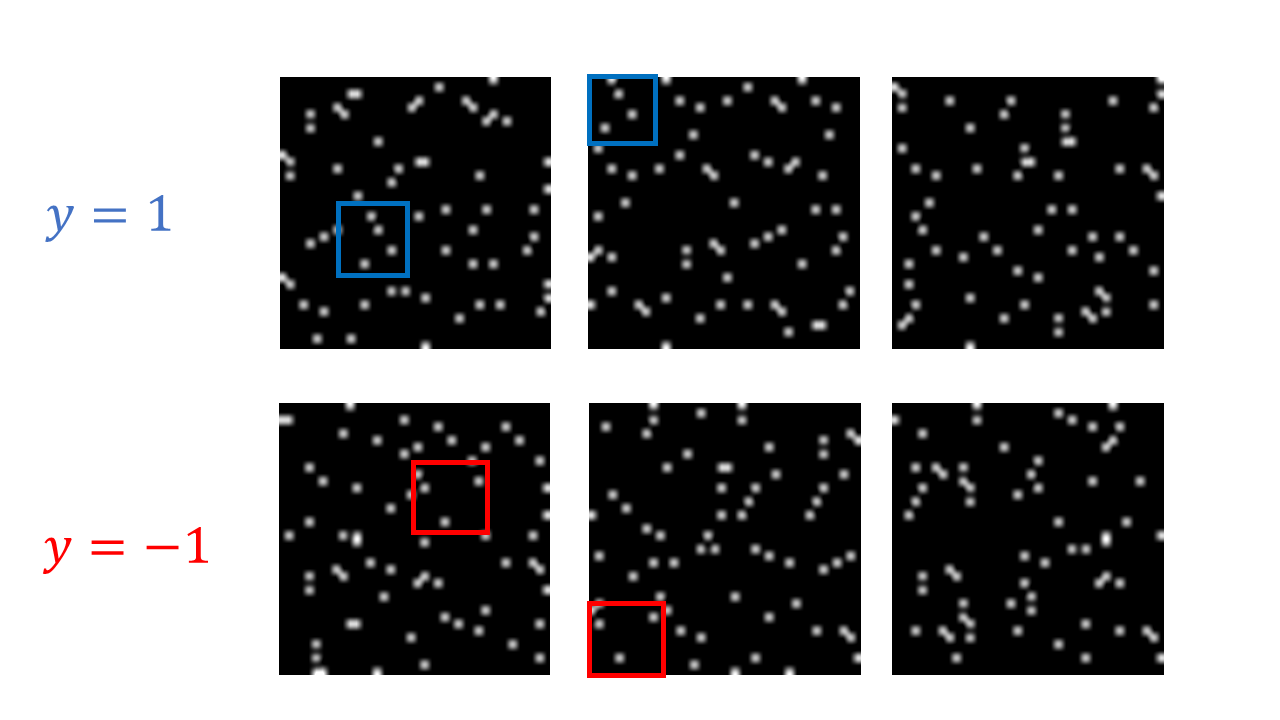}
		\caption{}
		\label{fig:mnist_data}
	\end{subfigure}\hspace{5mm}
	\begin{subfigure}{.46\textwidth}
		\includegraphics[width=1.0\linewidth]{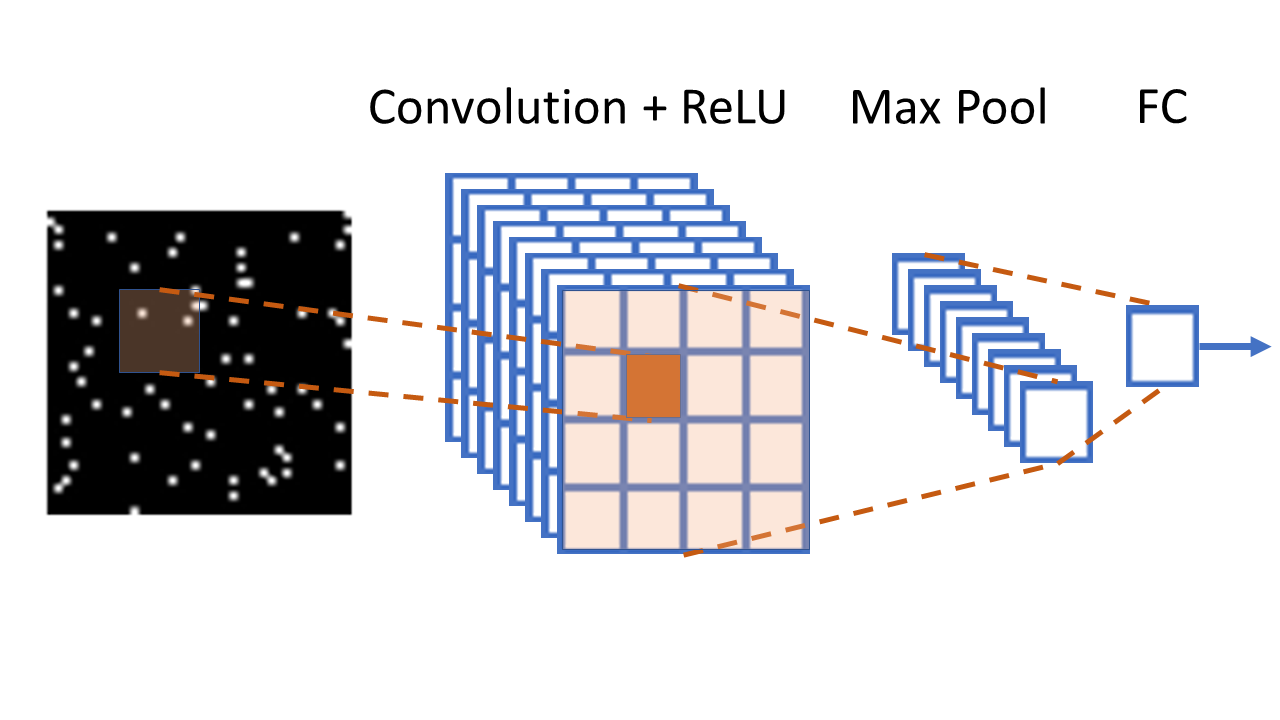}
		\caption{}
		\label{fig:mnist_filters}
	\end{subfigure}%
	\caption{(a) An example of the pattern detection tasks we consider. Input images consist of 4 rows of 4 patches each. Each image consists of a discriminative pattern. All other patterns are spurious and may appear in both classes. In the two leftmost images of each class, the corresponding discriminative pattern is shown. (b) An illustration of the architecture of the 3-layer overparameterized CNN we analyze in our setting.}
	\label{fig:intro}
\end{figure*}

\section{Related Work}
\label{sec:related_work}



Two recent works have provided theoretical support that that CNNs outperform FCNs. \citet{li2020convolutional} consider a simplified image classification task and prove a sample complexity gap between FCNs and \textit{single} channel CNNs. \citet{malach2020computational} prove that for simplified pattern detection tasks, there is a \textit{computational} separation between overparameterized CNNs and FCNs. Their generalization bound for overparameterized CNNs depends on the number of channels of the CNN. Therefore, both works do not show that over-parameterized CNNs are reslient to over-fitting, which is the main focus of our work.



Several recent works have studied the generalization properties of overparameterized CNNs. Some of these propose generalization bounds that depend on the number of channels \citep{long2019size, jiang2019fantastic}. Others provide guarantees for CNNs with constraints on the weights \citep{zhou2018understanding, li2018tighter}. 
Convergence of gradient descent to KKT points of the max-margin problem is shown in \cite{lyu2019gradient} and \cite{nacson2019lexicographic} for homogeneous models. However, their results do not provide generalization guarantees in our setting. \citet{gunasekar2018implicit} study the inductive bias of linear CNNs.

\citet{yu2019learning} study a pattern classification problem similar to ours. However, their analysis  their sample complexity guarantee depends on the network size, and thus does not explain why large CNNs do not overfit. Other works have studied learning under certain ground truth distributions. For example, \citet{brutzkus2019larger} study a simple extension of the XOR problem, showing that overparameterized CNNs generalize better than smaller CNNs. Single-channel CNNs are analyzed in \citep{du2017convolutional, du2017gradient, brutzkus2017globally, du2018many}. CNNs were analyzed via the NTK approximation \citep{li2019enhanced, arora2019exact}. Our analysis does not assume the NTK approximation. For example, we require a mild overparameterization in our results which does not depend on the number of samples, in contrast to NTK analyses. Furthermore, our results hold for sufficiently small initialization, which is not the regime of NTK analysis.

Other works study the inductive bias of gradient descent on fully connected linear or non-linear networks \citep{ji2018gradient, arora2019implicit, wei2019regularization, brutzkus2018sgd, dziugaite2017computing, allen2019learning, chizat2020implicit}. Fully connected networks were also analyzed via the NTK approximation \citep{du2018gradient2, du2018gradient, arora2019fine, fiat2019decoupling}. \citet{kushilevitz1996learning, shvaytser1990learnable} study the learnability of visual patterns distribution. However, our focus is on learnability using a specific algorithm and architecture: gradient descent trained on overparameterized CNNs.



\section{Preliminaries}
\label{sec:problem_formulation}

\paragraph{\underline{Data Generating Distribution:}} 
We consider a learning problem that captures a key property of visual classification. Many visual classes are characterized by the existence of certain patterns. For example an 8 will typically contain an x like pattern somewhere in the image. Here we consider an abstraction of this behavior where images consist of a set of patterns. Furthermore, each class is characterized by a pattern that appear exclusively in it. We define this formally below. 

Let $\cO$ be a set of $3\le l\le d$ orthogonal vectors in $\reals^{d}$. For simplicity, we assume that $\left\|\oo\right\|_2 = 1$ for all $\oo \in \cO$. We denote $\cO = \left\{\oo_1,\oo_2,...,\oo_l\right\}$ and use the notation $i \in \cO$ to denote $\oo_i \in \cO$. 

\commenteps{
For each $i \in \cO$ we define the ball of radius $\epsilon_i> 0$ around $\oo_i$ by $\beps{i}{\epsilon_i} = \left\{\pp \in \reals^d \mid \left\|\pp - \oo_i\right\| \le \epsilon_i \right\}$, where $\left\|\cdot\right\|$ is the Euclidean norm.  We refer to any vector in $\reals^d$ as a \textit{pattern}. We define $\epsmax = \max_i{\epsilon_i}$ and $\epsmin = \min_i{\epsilon_i}$. 
}

We consider input vectors $\xx$ with $n$ patterns. Formally, $\xx=(\xx[1],...,\xx[n]) \in \reals^{n d}$
where $\xx[i] \in \reals^d$ is the $i$th pattern of $\xx$.\footnote{We will generally use the notation $\vv[i] \in \reals^d$ for  any vector $\vv \in \reals^{nd}$.} We say that $\xx$ contains $\pp$ if there exists $j$ such that $\xx[j]=\pp$. We denote $\pp \in \xx$ if $\xx$ contains the pattern $\pp \in \reals^d$.  Let $\cP({\xx}) = \left\{\pp \in \xx \mid \pp \in \reals^d\right\}$ denote the set of all patterns in $\xx$. 

\commenteps{
Next, we define how labeled points are generated by a distribution parameterized by $\epsilon_1,...,\epsilon_l$. In our setting we consider three types of patterns: positive, negative and spurious. We will refer to the patterns in the set $\beps{1}{\epsilon_1}$ as positive, the patterns in the set $\beps{2}{\epsilon_2}$ as negative and the patterns in any set $\beps{i}{\epsilon_i}$, $3 \le i \le l$, as spurious. We let $\cS = \left\{3,...,l\right\}$.
}
Next, we define how labeled points are generated. In our setting we consider three types of patterns: positive, negative and spurious. We will refer to the pattern $\oo_1$ as positive, the pattern $\oo_2$ as negative and the patterns $\oo_3,\ldots,\oo_l$ as spurious. We let $\cS = \left\{3,...,l\right\}$.

\commenteps{
We consider distributions $\cD$ over $\left(\xx, y\right) \in \reals^{n d} \times \{\pm 1\}$. We assume that for each $1 \le i \le l$ there is a distribution $\mu_i$ supported on $\beps{i}{\epsilon_i}$. In the distribution $\cD$, each positive sample contains a single positive pattern and $n-1$ randomly sampled spurious patterns. Similarly, a negative sample has a single negative pattern and $n-1$ spurious patterns. Formally, we define $\cD$ with the following properties:
\begin{enumerate}
    \item $\prob\left(y = 1\right) = \prob\left(y = -1\right) = \frac{1}{2}$.
    \item Given $y=1$, a vector $\xx$ is sampled as follows. Randomly sample a pattern $\pp_+ \in \beps{1}{\epsilon_1}$ with respect to $\mu_1$. Randomly sample an index $1 \le j_+ \le n$ and set $\xx\left[j_+\right] = \pp_+$. Then, for each $1 \le j \le n$ such that $j \neq j_+$, randomly choose $i_j \in \cS$ and sample a vector $\pp_j \in \beps{i_j}{\epsilon_{i_j}}$ with respect to $\mu_{i_j}$. Set $\xx[j] = \pp_j$.
    \item Given $y=-1$, sample a pattern $\pp_- \in \beps{2}{\epsilon_2}$ with respect to $\mu_2$ and set $\xx\left[j_-\right] = \pp_-$ for a randomly sampled $1 \le j_- \le n$. The remaining $n-1$ patterns are spurious sampled exactly as in the case of $y=1$.
\end{enumerate}
}

We consider distributions $\cD$ over $\left(\xx, y\right) \in \reals^{n d} \times \{\pm 1\}$. In the distribution $\cD$, each positive sample contains the positive pattern and $n-1$ randomly sampled spurious patterns. Similarly, a negative sample has a single negative pattern and $n-1$ spurious patterns. Formally, we define $\cD$ with the following properties:
\begin{enumerate}
    \item $\prob\left(y = 1\right) = \prob\left(y = -1\right) = \frac{1}{2}$.
    \item Given $y=1$, a vector $\xx$ is sampled as follows. Randomly sample an index $1 \le j_+ \le n$ for placing the positive pattern, and set $\xx\left[j_+\right] = \oo_1$. Then, for each $1 \le j \le n$ such that $j \neq j_+$, randomly choose $i_j \in \cS$ and set $\xx[j] = \oo_{i_j}$.
    \item Given $y=-1$, do the same as $y=1$, using $\oo_2$ instead of $\oo_1$.
\end{enumerate}


\figref{fig:mnist_data} shows an example of the above distribution $\cD$. 

\paragraph{\underline{CNN Architecture:}}
For learning the above distributions, we consider a 3-layer CNN that consists of a convolutional layer with non-overlapping filters, followed by ReLU, max pooling and a fully-connected layer. The network is parametrized by $\thth = \left(W,\aa\right)$ where $W\in \reals^{k \times n}$ and each row $i$ of $W$, denoted by $\ww_i \in \reals^d$, corresponds to a different channel. The vector $\aa=(a_1,...,a_k) \in \reals^k$ corresponds to the weights of the fully connected layer. 

For an input $\xx=(\xx[1],...,\xx[n]) \in \reals^{n d}$ where $\xx[i] \in \reals^d$, the output of the network is:
\begin{equation}
\label{eq:network}
\cnnth{\thth}(\xx)=
\sum_{i=1}^{k}a_i\Big[\max_j\left\{\sigma\left(\ww_i\cdot \xx[j] \right)\right\}\Big] 
\end{equation}
where $\sigma(x)=\max\{0,x\}$ is the ReLU activation. For simplicity, we will usually denote $\cnn(\xx)$ when $\thth$ is clear from the context. We define $\hcnn(\cX)$ to be the hypothesis class of all functions $\textup{sign}\left(\cnn\right):\cX \rightarrow \{\pm 1\}$, where $\cX \subseteq \reals^{nd}$.\footnote{We assume WLOG that $\textup{sign}(0)=-1$. Furthermore, we note that the network $\cnn$ can have any number of channels $k$.} 

\paragraph{\underline{CNN Training Algorithm:}}

For the analysis of learning CNNs, we will consider a layerwise optimization algorithm which performs gradient updates layer-by-layer, starting from the first layer. Layerwise optimization algorithms are used in practice and have been shown to achieve performance that is comparable to end-to-end methods, e.g., on ImageNet \citep{belilovsky2019greedy}. Furthermore, the assumption on layerwise optimization has been used previously for theoretically analyzing neural networks \citep{malach2020implications}.

 For a set of points $A \subseteq \reals^{nd} \times \{\pm 1\}$ we consider minimizing the loss: 
 \begin{equation}
\label{eq:loss}
\cL[A](\thth) = \frac{1}{\left|A\right|}\sum_{(\xx,y) \in A}\ell\left(y\cnnth{\thth}(\xx)\right)
 \end{equation}
where $\ell(x) = \log\left(1+e^{-x}\right)$ is the binary cross entropy loss. Let $S = \left\{(\xx_1,y_1),...,(\xx_m,y_m)\right\}$ be a training set with $m$ IID samples from $\cD$. For the analysis, we partition $S = S_1 \cup S_2$ to two disjoint sets $S_1$ and $S_2$ such that $S_1 = \left\{(\xx_1,y_1),...,(\xx_{\lceil\frac{m}{2}\rceil},y_{\lceil\frac{m}{2}\rceil})\right\}$. We denote $\cL_i = \cL[S_i]$, and $m_i = \left|S_i\right|$ for $i\in \{1,2\}$. For convenience, we will say that $\xx \in S_i$ if there exists $y \in \{\pm 1\}$ such that $(\xx,y) \in S_i$. We denote the set of positive samples in $S_i$ by $S_i^+ = \left\{\xx \mid (\xx,1) \in S_i\right\}$ and the negative samples in $S_i$ by $S_i^- = \left\{\xx \mid (\xx,-1) \in S_i\right\}$.

The layerwise optimization algorithm for learning CNNs is given in Figure \ref{fig:layerwise}. The reason we optimize over two losses is technical: we need a fresh IID sample ($S_2$) in the second layer optimization for the generalization analysis (see Section \ref{sec:gen_conv}).

We define $\wvec{t}_i$ to be the $i$th row of $\wmat{t}$. For $\xx \in S$, $t > 0$ and $1 \le i \le k$, define $\jit{t}_i(\xx) = \argmax_{1\le j\le n}{\wvec{t}_i\cdot \xx[j]}$, i.e., $\jit{t}_i(\xx)$ corresponds to the pattern in $\xx$ that maximally activates $\wvec{t}_i$. If $\wvec{t}_i \cdot \xx\left[\jit{t}_i(\xx)\right] > 0$, define $\pwvec{t}_i(\xx) = \xx\left[\jit{t}_i(\xx)\right]$. Otherwise, define $\pwvec{t}_i(\xx) = 0$. Notice that the following equality holds: 
\begin{equation}
\label{eq:max_pattern}
     \max_j\left\{\sigma\left(\wvec{t}_i \cdot \xx[j] \right)\right\} = \wvec{t}_i \cdot \pwvec{t}_i(\xx)
\end{equation}
\begin{rem}
We note that it is necessary to make assumptions regarding the data distribution because the general case is intractable for optimization (because it includes
neural net learning as a special case). We believe that our data generating distribution does reflect core aspects of pattern detection problems. Furthermore, 
the analysis of overparameterized max pooling networks has not been performed for any task, and analysis of simplified tasks has been shown to be fruitful for understanding CNNs \citep{li2020convolutional,malach2020computational}. Additionally, non-overlapping filters are used in practice, and multiple theoretical works have analyzed CNNs with non-overlapping filters due to their tractability \citep{sharir2018expressive}. Finally, we note that in Section \ref{sec:exps} we show that our analysis is in line with the performance of CNNs and FCNs in more complex tasks.

\end{rem}

\begin{figure}
	\begin{algorithm}[H]
		\caption{$\layeralg$}
		\label{alg:layerwise}
		\begin{algorithmic}
			\STATE \textbf{Input:} Training set $S \subseteq \reals^{n d} \times \{\pm 1\}$, numbers of iterations $T_1,T_2 \in \mathbb{N}$ and learning rates $\eta_1, \eta_2 \in \reals$.
			\STATE Initialize $\wmat{0}$ and $\ai{0}$.
			\STATE \textbf{for} $t=1,...,T_1$ \textbf{do}: 
			\begin{ALC@g}
				\STATE $\wmat{t} \leftarrow \wmat{t-1} - \eta_1\frac{\partial \cL_1}{\partial W}\left(\wmat{t-1},\aai{0}\right)$.
			\end{ALC@g}
			\STATE \textbf{for} $t=1,...,T_2$ \textbf{do}: 
			\begin{ALC@g}
				\STATE $\aai{t} \leftarrow \aai{t-1} - \eta_2\frac{\partial \cL_2}{\partial \aa}\left(\wmat{T_1},\aai{t-1}\right)$.
			\end{ALC@g}
		\STATE \textbf{return} $\left(\wmat{T_1},\aai{T_2}\right)$.
		\end{algorithmic}
	\end{algorithm}
	\caption{Layerwise optimization algorithm for CNNs.}
	\label{fig:layerwise}
\end{figure}

\section{VC Dimension Bound}
\label{sec:vc}
Thus far we described a data generating distribution and a neural architecture. We now ask how expressive is this neural architecture. Because of the pooling layer, it may seem that the network has limited capacity, even for an unbounded number of channels. However, as we show next the capacity in terms of VC dimension is in fact exponential in $d$ in this case. This in turn means that the network can separate datasets of size up to exponential in $d$, and can thus potentially overfit badly. As we show in later sections, overfitting is avoided when learning using gradient descent. 

Fix $\cX \subseteq \reals^{nd}$ to be the support of the distribution $\cD$, i.e., each input vector consist of either a positive or negative pattern and $n-1$ spurious patterns. Denote the VC dimension of $\hcnn(\cX)$ by $\textup{VCdim}\left(\hcnn(\cX)\right)$. If we find $\textup{VCdim}\left(\hcnn(\cX)\right)$, then we can apply generalization bounds which show that any Empirical Minimization algorithm (ERM) has sample complexity of $O\left(\textup{VCdim}\left(\hcnn(\cX)\right)\right)$ \citep{blumer1989learnability}, and there exists ERMS with a tight lower bound.\footnote{Recall that an ERM algorithm is any algorithm which minimizes the empirical risk. See \citet{shalev2014understanding} for details.} Thus, lower bounding the VC dimension leads to worst-case lower bound on sample complexity,


We begin by recalling the definition of the VC dimension.
\begin{definition}
Let $\cH$ be a hypothesis class of functions from $\cX$ to $\{\pm 1\}$. For any non-negative in integer $m$, we define:
\begin{equation}
    \Pi_{\cH}(m) = \max_{x_1,...,x_m\in\cX}\left|\left\{\left(h(x_1),...,h(x_m)\right) \mid h \in \cH\right\}\right|
\end{equation}
If $\left|\left\{\left(h(x_1),...,h(x_m)\right) \mid h \in \cH\right\}\right| = 2^m$, we say that $\cH$ shatters the set $\left\{x_1,...,x_m\right\}$. The VC dimension of $\cH$, denoted by, $\textup{VCdim}\left(\cH\right)$, is the size of the largest shattered set, or equivalently, the largest $m$ such that $\Pi_{\cH}(m) = 2^m$.
\end{definition}
\commenteps{
By the definition above, we see that the largest shattered set determines the VC dimension. In our setting, any set of samples is a subset of the support of the distribution $\cD$. Therefore, we should set $\cX$ to be the support of $\cD$. We note that in more general analyses of the VC dimension of neural netwotks, e.g. in \citet{bartlett2019nearly}, $\cX$ is usually set to be the whole real input space (e.g., in our case $\reals^{nd}$). However, if we assume this, we will get a less accurate lower bound, because then we will consider shattered sets which are impossible to sample in our setting.

Therefore, after fixing $\left\{\epsilon_i\right\}_i$, we set $\cX$ to be the set of $\xx \in \reals^{nd}$, such that $\xx$ can contain only one pattern from either $\beps{1}{\epsilon_1}$ or $\beps{2}{\epsilon_2}$ and all other patterns are from $\beps{i}{\epsilon_i}$ for $1 \le i \le l$. For the analysis in this section we also assume that $l=d$. What is $\textup{VCdim}\left(\hcnn(\cX)\right)$ in this case? 
}

In the next theorem we show that $\textup{VCdim}\left(\hcnn(\cX)\right)$ is at least exponential in $d$. Therefore, the best generalization bound we can hope for using a VC dimension analysis scales exponentially with $d$.\footnote{By fixing $\cX$ to be the support of $\cD$ we get a more accurate VC lower bound than the case where $\cX = \reals^{nd}$. This is because in the latter case, shattered sets that are impossible to sample from $\cD$ may be considered in the lower bound.}


\begin{thm}
\label{thm:vc}
Assume that $d=2n$ and $n \ge 2$, then $\textup{VCdim}\left(\hcnn(\cX)\right) \ge 2^{\frac{d}{2}-1}$.
\end{thm}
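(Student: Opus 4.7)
The plan is to exhibit an explicit shattered set of size $2^{n-1}$ inside $\cX$. Since $d=2n$, there are exactly $2n-2$ spurious patterns $\oo_3,\dots,\oo_{2n}$; I partition them into the $n-1$ pairs $(\oo_3,\oo_4),(\oo_5,\oo_6),\dots,(\oo_{2n-1},\oo_{2n})$. For each $b\in\{0,1\}^{n-1}$, let $\xx^b$ be the positive sample whose first slot is $\oo_1$ and whose $(i{+}1)$-st slot is $\oo_{2i+1+b_i}$ for $i=1,\dots,n-1$. The $2^{n-1}$ inputs $\{\xx^b\}_b$ are distinct elements of the support $\cX$, and the goal is to realize every $\tau\in\{\pm 1\}^{2^{n-1}}$ as the sign pattern of some $\cnnth{\thth}$ on $\{\xx^b\}_b$.

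For each $\alpha\in\{0,1\}^{n-1}$ I introduce a channel with filter $\ww_\alpha=\sum_{i=1}^{n-1}\oo_{2i+1+\alpha_i}$, yielding a network with $k=2^{n-1}$ channels. By orthonormality of $\cO$, for any slot $\xx^b[j]$ the dot product $\ww_\alpha\cdot\xx^b[j]$ lies in $\{0,1\}$: the $\oo_1$ slot contributes $0$, and a spurious slot contributes $1$ exactly when $b_i=\alpha_i$ at that index. Taking the max over slots then gives
\[
f_\alpha(\xx^b)\;:=\;\max_j\sigma\bigl(\ww_\alpha\cdot\xx^b[j]\bigr)\;=\;\mathbf{1}\bigl[\exists\,i:b_i=\alpha_i\bigr]\;=\;1-\mathbf{1}[b=\bar\alpha],
\]
where $\bar\alpha$ denotes the bitwise complement. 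The associated feature matrix $M\in\reals^{2^{n-1}\times 2^{n-1}}$ defined by $M_{\alpha,b}=f_\alpha(\xx^b)$ thus equals $J-P$, where $J$ is the all-ones matrix and $P$ is the permutation matrix for bitwise complementation.

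To finish, I show $M$ is invertible and read off the second-layer weights. If $Mv=0$ then $\sum_b v_b=v_{\bar\alpha}$ for every $\alpha$, which forces $v$ to be a constant vector $c\one$; substituting back gives $(2^{n-1}-1)c=0$, and $n\ge 2$ then yields $c=0$. Hence $M$ is invertible, and for any target $\tau\in\{\pm 1\}^{2^{n-1}}$ the choice $\aa=(M^T)^{-1}\tau$ gives $\cnnth{\thth}(\xx^b)=\sum_\alpha a_\alpha f_\alpha(\xx^b)=(M^T\aa)_b=\tau_b$, so $\mathrm{sign}(\cnnth{\thth}(\xx^b))=\tau_b$ for every $b$. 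This shatters $\{\xx^b\}_b$ and establishes $\textup{VCdim}(\hcnn(\cX))\ge 2^{n-1}=2^{d/2-1}$. The main obstacle is the non-linear interaction between the inner product and the max-pooling: a generic filter would mix contributions across slots and destroy any clean feature structure. The key step is that orthonormality of $\cO$ collapses each dot product to $\{0,1\}$, so max-pooling acts as a disjunction over coordinates, producing the $J-P$ matrix whose invertibility is easy to verify directly.
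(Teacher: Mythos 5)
Your proof is correct and follows essentially the same route as the paper: the same shattered set indexed by $\{0,1\}^{n-1}$, the same filters (sums of the spurious patterns selected by a bit string), and the same observation that max-pooling over orthonormal slots turns the feature matrix into the all-ones matrix minus a (complement) permutation, which is then inverted to realize any labeling. The only differences are cosmetic --- you place the solved coefficients in the second-layer weights $\aa=(M^T)^{-1}\tau$ rather than scaling the filters by $|\alpha_I|$ and splitting them into positive/negative channels with $\aa=(1,\dots,1,-1,\dots,-1)$, and you verify invertibility by a direct null-space computation rather than invoking Sherman--Morrison --- both of which slightly streamline the paper's argument without changing it.
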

\begin{proof}
We will construct a set $B \subseteq \cX$ of size $2^{n-1} = 2^{\frac{d}{2}-1}$ that can be shattered. We note that the inclusion $B \subseteq \cX$ will hold for any $\epsilon_i > 0$, $1 \le i \le d$. For a given $I \in \{0,1\}^{n-1}$ let $I[j]$ be its $j$th entry. For any such $I$, define a point $\xx_I$ such that for any $1 \le j \le n-1$, $\xx_I[j] = I[j]\oo_{2j+1} + (1-I[j])\oo_{2j+2}$. Furthermore, arbitrarily choose $\xx_I[n] = \oo_{1}$ or $\xx_I[n] = \oo_{2}$ and define $B = \left\{\xx_I \mid I \in \{0,1\}^{n-1}\right\}$.

Now, assume that each point $\xx_I \in B$ has label $y_I$. We will show that there is a network $\cnn \in \hcnn(\cX)$ such that $\cnn(\xx_I) = y_I$ for all $I$. For each $I \in \{0,1\}^{n-1}$, define $\wvec{I} = \max\left\{\alpha_I,0\right\}\sum_{1 \le j \le n-1}\xx_I[j]$ and $\uvec{I} = \max\left\{-\alpha_I,0\right\}\sum_{1 \le j \le n-1}\xx_I[j]$, where $\left\{\alpha_I\right\}$ is the unique solution of the following linear system with $2^{n-1}$ equations. For each $I \in \{0,1\}^{n-1}$ the system has the following equation:
\begin{align*}
\label{eq:linear_equations_vc}
    \sum_{I' \in \{0,1\}^{n-1} \setminus \{I\}}\alpha_{I'} = y_{I^c} \numberthis
\end{align*}
where for any $I \in \{0,1\}^{n-1}$, $I^c \in \{0,1\}^{n-1}$ is defined such that $I^c[j] = 1-I[j]$ for all $1 \le j \le n-1$.
There is a unique solution because the corresponding matrix of the linear system is the difference between an all 1's matrix and the identity matrix. By the Sherman-Morrison formula \citep{sherman1950adjustment}, this matrix is invertible, where in the formula the outer product rank-1 matrix is the all 1's matrix and the invertible matrix is minus the identity matrix. 

Set $W$ to be the matrix with rows $\wvec{I}$ followed by rows $\uvec{I}$. Let $\aa$ be the a vector of dimension $2^n$ such that $\aa = (\underbrace{1,...,1}_{2^{n-1}},\underbrace{-1,...,-1}_{2^{n-1}})$.
 
Then, for $\cnn$ with parameters $\thth = \left(W,\aa\right)$ and any $\xx_I$:

\begin{align*}
    &\cnnth{\thth}(\xx_I) = \sum_{I' \in \{0,1\}^{n-1}}  \Big[\max_j\left\{\sigma\left(\wvec{I'}\cdot \xx[j] \right)\right\} \\ &- \max_j\left\{\sigma\left(\uvec{I'} \cdot \xx[j] \right)\right\}\Big] \\
    &= \sum_{I' \in \{0,1\}^{n-1}}{\alpha_{I'} \max_j\left\{\sigma\left(\sum_{1 \le i \le n-1}\xx_{I'}[i] \cdot \xx_I[j] \right)\right\}} \\ &= \sum_{I' \in \{0,1\}^{n-1} \setminus \{I^c\}}\alpha_{I'} = y_{I}
\end{align*}
by the definition of $\cnn$, the orthogonality of the patterns $\{\oo_i\}_i$, and \eqref{eq:linear_equations_vc}. We have shown that any labeling $y_I$ can be achieved, and hence the set is shattered, completing the proof.
\end{proof}

The main limitation of the VC analysis is that it does not take into account the specific implementation of the ERM algorithm \citep{zhou2018understanding}. In the next section, we will show a more fine-grained analysis which is specific to the layerwise optimization algorithm, and can thus benefit from the specific inductive bias of this algorithm. As a result, we will obtain a significantly better generalization guarantee.

\section{Generalization Analysis of Gradient Descent} 
\label{sec:gen_conv}
In this section we analyze the optimization and generalization performance of the layer-wise gradient descent algorithm $\layeralg$ for training overparameterized CNNs (\eqref{eq:network}). We will show that it converges to zero training loss and its sample complexity is $O(d)$. This is in contrast to the result of the previous section which shows a VC dimension lower bound which is exponential in $d$, and therefore there are other ERM algorithms that can result in arbitrarily bad test error. 

For simplicity of the analysis, we assume that we initialize each filter $\wvec{0}_i$ from the $(d-1)$-sphere of radius $r$, namely, $\left\{\zz \in \reals^d \mid \left\|\zz\right\| = r\right\}$. We sample each $\ai{0}_i \in \reals$ uniformly at random from $\{\pm 1\}$. Additionally, the parameters $\wmat{0}$ and $\aai{0}$ are sampled independently. Our main result is summarized in the following theorem.

\begin{thm}
\label{thm:conv}
Let $S$ be an IID training set of size $m$ sampled from $\cD$. Assume that we run $\layeralg$ with $T_1 > 0$, $\eta_1 \le \frac{1}{4k(T_1+1)}$ and $\eta_2 < 8k$. Assume that $r \le \frac{\eta_1 }{200}$ and $k > 8d^3$. Then, with probability at least $(1- \delta)(1 - 4e^{-d} - 4e^{-\frac{m}{36}}$), the following holds:\footnote{The factor $e^{-d}$ in the confidence guarantee can be improved to $e^{-\Theta(k)}$. Note that the algorithm can be boosted with multiple restarts. We note also that $O(\cdot)$ hides a dependence on $\delta$.} \\ (1) $\lim_{T_2 \rightarrow \infty}\cL_2\left(\left(\wmat{T_1},\aai{T_2}\right)\right) = 0$. \\
(2) $\lim_{T_2 \rightarrow \infty}\prob_{(\xx,y) \sim \cD}\left(\textup{sign}\left(\cnnth{\left(\wmat{T_1},\aai{T_2}\right)}(\xx)\right)\neq y\right) \\ = O\left(\sqrt{\frac{d}{m}}\right)$
\end{thm}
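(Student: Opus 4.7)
My plan is to decompose the proof along the two phases of $\layeralg$: the first phase (training $W$ on $\cL_1$) produces filters whose max-pooled activations encode the class of each sample, and the second phase (training $\aa$ on $\cL_2$) then reduces to convex logistic regression on a linearly separable feature map, to which standard convergence and margin-generalization results apply.

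Phase 1 (first-layer dynamics): I would analyze the gradient of $\cL_1$ with respect to a single row $\wvec{t}_i$ using \eqref{eq:max_pattern}. This gradient equals $\ai{0}_i$ times a weighted sum over $(\xx,y)\in S_1$ of $y \cdot \pwvec{t}_i(\xx) \cdot \ell'\!\bigl(y\cnn(\xx)\bigr)$. Because the initial scale $r$ is tiny and $\ai{0}_i \in \{\pm 1\}$, the logistic derivative is essentially a constant negative factor throughout training, so the dynamics simplify to an update that aligns $\wvec{t}_i$ with the class-weighted average of the activating patterns. The core claim is a \emph{pattern-lock} statement: with probability at least $1 - 4e^{-d}$ over initialization, there is a constant fraction of indices $i$ with $\ai{0}_i = +1$ for which $\wvec{0}_i$ already has a strictly larger inner product with $\oo_1$ than with any spurious $\oo_3,\ldots,\oo_l$ appearing in $S_1$, and a symmetric set for $\ai{0}_i = -1$ with $\oo_2$. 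The overparameterization $k > 8d^3$ provides the filter budget needed for this event after union-bounding over the $O(md)$ inner-product comparisons. Given the initial advantage, I would induct on $t$: provided $\jit{t}_i(\xx)$ points at the discriminative pattern of $\xx$ for every $\xx\in S_1^+$ (resp.\ $S_1^-$), the next update adds a positive scalar multiple of $\oo_1$ (resp.\ $\oo_2$) to $\wvec{t}_i$, strictly increasing every relevant inner-product gap and preserving the argmax. The step-size condition $r \le \eta_1/200$ is precisely what makes the inductive gap monotone. The outcome after $T_1$ rounds is that these \emph{good} filters have a dominant $\oo_1$ or $\oo_2$ component and negligible spurious components.

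Phase 2 (second-layer dynamics and generalization): Let $\phi(\xx)\in\reals^k$ be the feature map $\phi_i(\xx) = \max_j \sigma(\wvec{T_1}_i \cdot \xx[j])$. Phase 1 implies that for every $\xx$ in the support of $\cD$, $\phi_i(\xx)$ is large exactly when the class of $\xx$ matches the sign $\ai{0}_i$, and is essentially zero otherwise; in particular $y\cdot\langle \aai{0}, \phi(\xx)\rangle > 0$ with a quantitative margin $\gamma$ that I can extract from the Phase 1 inner-product bounds. The loss $\cL_2(\wmat{T_1},\cdot)$ is convex in $\aa$ and the features on $S_2$ are linearly separable by $\aai{0}$ with margin $\gamma$, so by classical results for logistic GD on separable data (Soudry et al.), with $\eta_2 < 8k$ we have $\cL_2 \to 0$ and $\aai{T_2}/\|\aai{T_2}\|$ converges to the max-margin direction, giving part (1). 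For part (2), I would apply a standard Rademacher-type margin bound for linear classifiers over $\phi$: the ratio $\max_{\xx}\|\phi(\xx)\|/\gamma$ scales as $\sqrt{d}$ because the spurious contributions to $\|\phi\|$ accumulate over $n-1 = d/2-1$ coordinates, and this yields population error $O(\sqrt{d/m})$. The extra $4e^{-m/36}$ confidence factor collects Hoeffding-type events that $|S_1^\pm|$ and $|S_2^\pm|$ are within a constant factor of $m/4$.

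The main obstacle is Phase 1, specifically the stability of the max-pooling argmax. Because $\argmax$ is discrete, smooth gradient-flow arguments do not apply, and I must instead inductively propagate a family of inner-product gaps $\wvec{t}_i\cdot\oo_{y} - \wvec{t}_i\cdot\oo_s$ uniformly over all $i$, all $(\xx,y)\in S_1$, all spurious indices $s$, and all $t \le T_1$. Getting this uniform control requires a delicate interplay between the initialization scale $r$, the learning rate $\eta_1$, and the overparameterization $k$, and is where most of the technical effort lies. Once Phase 1 is locked in, Phase 2 is a straightforward appeal to separable logistic-GD convergence and a margin bound for linear predictors.
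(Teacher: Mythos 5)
Your decomposition matches the paper's: ``lucky'' filters at initialization, an inductive pattern-lock argument for the first-layer dynamics, and then separability of the pooled features plus the implicit-bias result of \citet{soudry2018implicit} and a norm-based margin bound for the second layer. However, Phase 2 as you state it has a genuine gap. You claim that after Phase 1 ``$\phi_i(\xx)$ is large exactly when the class of $\xx$ matches the sign $\ai{0}_i$, and is essentially zero otherwise,'' so that $\aai{0}$ itself separates the features with a margin. This is not established by Phase 1 and is most likely false: the lucky filters form only a $\Theta(1/d)$ fraction of the $k$ channels (the probability that a uniformly initialized filter has $\oo_1$ as its maximizer over $\cO\setminus\{2\}$ together with the matching sign is about $\frac{1}{2(d-1)}$, not a constant, contrary to your ``constant fraction'' claim), while the remaining $(1-\Theta(1/d))k$ non-lucky filters have pooled activations bounded above only by $2\eta_1 T_1$ --- the \emph{same order} as the lucky activations $\ge \eta_1 T_1/9$ --- and they carry random signs $\ai{0}_i$. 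Nothing forces their contribution to $\langle\aai{0},\phi(\xx)\rangle$ to cancel, and they vastly outnumber the lucky coordinates. The paper circumvents this by exhibiting a separator $\vv^*$ supported \emph{only} on the lucky coordinates, exploiting that on those coordinates a class-mismatched lucky filter activates at most $r\le\eta_1/200$ (the second claim of the induction), which yields margin $1$ with $\left\|\vv^*\right\|^2=O\left(d/(k\eta_1^2T_1^2)\right)$.

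Relatedly, your accounting of where the $\sqrt{d}$ comes from is off. It is not that ``spurious contributions to $\|\phi\|$ accumulate over $n-1=d/2-1$ coordinates'' (the relation $d=2n$ is an assumption of the VC theorem only, and the relevant feature space has dimension $k$, not $n$); it is that the separator must spread its mass over only the $\Theta(k/d)$ lucky coordinates, so that $\left\|\vv^*\right\|^2\max_{\xx}\left\|\zz(\xx)\right\|^2=O(d)$. If a constant fraction of filters were lucky, as you assert, the resulting bound would be dimension-free. Your Phase 1 outline is otherwise consistent with the paper (including the role of the tiny initialization $r\le\eta_1/200$, the bounded network output keeping the logistic derivative in $[-1,-1/3]$, and the Hoeffding event controlling the class balance of $S_1$), though the union bound you invoke over $O(md)$ inner-product comparisons is unnecessary: the lucky event depends only on the $l\le d$ fixed orthogonal patterns, not on the sample.
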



The first part of the theorem is an optimization result stating that the $\layeralg$ will converge to zero $\cL_2$ loss. We note that this is despite the non-convexity of the loss $\cL_2$. The second part of the theorem states that the learned classifier will have a test error of order $\sqrt{\frac{d}{m}}$. Thus, the sample complexity is \textit{linear} in $d$. This is contrast to the VC dimension bound which is exponential in $d$.


Before proving the theorem, we make several remarks on the result. First, for simplicity we present asymptotic results for $T_2$. We can provide convergence rates that depend linearly on $d$ by changing the second layer optimization hyper-parameters (initialization and step size) and use recent results of \citet{ji2019refined}. See Section \ref{sec:convergence_rates} for details. Second, note that $k > 8d^3$ is a mild overparameterization condition, compared to other results which require $k$ to depend on the number of samples \citep{du2018gradient, ji2019polylogarithmic}. 

\begin{proof}[Proof of Theorem \ref{thm:conv}] 
We will prove the theorem in three parts. We defer the proofs of technical lemmas to the supplementary. We first outline the main ideas of the proof. In the first part we will prove a property of the initialization of the first layer. We show that at initialization there are sufficiently many ``lucky'' filters $\wvec{0}_i$ in the following sense. Either the pattern in $\cO$ that maximally activates them is $\oo_1$ and $\ai{0}_i = 1$, or the maximum activating pattern is $\oo_2$ and $\ai{0}_i = -1$. In essence, these filters are ``good'' detectors because they detect the discriminative patterns, with the right sign of $\ai{0}_i$.

In the second part we analyze the dynamics of the filters in the first layer. We will show that the ``lucky'' filters continue to detect the discriminative patterns and their projection on either $\oo_1$ or $\oo_2$ becomes larger in each iteration. In contrast, we upper bound the norm of the filters that are "non-lucky". Thus, after training the first layer, $\layeralg$ creates a new representation of the data in the second layer with the following properties: there are sufficiently many discriminative features with sufficiently large absolute values, and the remaining features have a bounded absolute value.

In the third part, we analyze the optimization of the second layer on the new representation. Using the properties of the representation, proved in the second part, we show that this representation induces a distribution on the samples which is linearly separable. Furthermore, it can be classified with margin 1 by a linear classifier of low norm. Then, we apply a result of \citet{soudry2018implicit}, which implies that training the second layer, which is equivalent to logistic regression on the new representation, converges to a low norm solution with zero training loss. Finally, we apply a norm-based generalization bound \citep{shalev2014understanding} to obtain the sample complexity guarantee.

\underline{\textbf{Part 1: Properties of the Initialization}:} 

Define the sets $\cA^+ = \left\{i \mid \ai{0}_i = 1 \right\}$, $\cA^- = \left\{i \mid \ai{0}_i = -1 \right\}$ and the following sets:

\begin{align*}
\label{eq:disc_filters}
\cW^+_t &= \left\{i \mid \argmax_{l \in \cO \setminus \{2\}} \wvec{t}_i\cdot \oo_l = 1, \, \wvec{t}_i\cdot \oo_1 > 0\right\} \\
\cW^-_t &= \left\{i \mid \argmax_{l \in \cO \setminus \{1\}} \wvec{t}_i\cdot \oo_l = 2,\, \wvec{t}_i \cdot \oo_2 > 0 \right\}
\numberthis
\end{align*}

The sets $\cW^+_0\cap \cA^+$ and $\cW^-_0 \cap \cA^-$ correspond to the set of ``lucky'' filters. We prove a lower and upper bound on the size of these sets.
\begin{lem}
	\label{LEM:INIT_GOOD_FILTERS}
	With probability at least $1-4e^{-d}$:
	\begin{equation}
	    \frac{k}{4d} \le \left|\cW^+_0\cap \cA^+\right|, \left|\cW^-_0 \cap \cA^-\right| \le \frac{k}{d}
	\end{equation}
\end{lem}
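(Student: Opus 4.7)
The plan is to exploit the rotational symmetry of the spherical initialization to compute the probability that a single filter falls into $\cW_0^+\cap\cA^+$, and then apply a Hoeffding-type concentration bound across the $k$ independent filters. By symmetry, the argument for $\cW_0^-\cap\cA^-$ will be identical with the roles of $\oo_1$ and $\oo_2$ (and the sign of $\ai{0}_i$) swapped, and a final union bound will yield the $4e^{-d}$ failure probability.

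For the per-filter calculation, I would first use rotational invariance of the uniform distribution on the $r$-sphere to assume that $\oo_1,\ldots,\oo_l$ are the first $l$ standard basis vectors of $\reals^d$. Writing $\wvec{0}_i = rZ/\|Z\|$ for $Z\sim N(0,I_d)$, the projections $\wvec{0}_i\cdot\oo_j$ inherit the orderings and signs of the first $l$ coordinates of $Z$; in particular, the signs $\mathrm{sign}(\wvec{0}_i\cdot\oo_j)$ for $j\in\cO$ are i.i.d.\ uniform on $\{\pm 1\}$, and the joint law of $(\wvec{0}_i\cdot\oo_j)_{j\ne 2}$ is permutation-invariant on $\{1\}\cup\{3,\ldots,l\}$.

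Next, by permutation symmetry the events $\{\argmax_{j\ne 2}\wvec{0}_i\cdot\oo_j = j_0 \text{ and } \wvec{0}_i\cdot\oo_{j_0}>0\}$ for $j_0\in\{1\}\cup\{3,\ldots,l\}$ are disjoint with equal probability, and their union is $\{\max_{j\ne 2}\wvec{0}_i\cdot\oo_j>0\}$, which has probability $1-2^{-(l-1)}$ by sign independence. Dividing by $l-1$ and multiplying by the independent $1/2$ probability that $\ai{0}_i = 1$ yields
\[ p \;:=\; \prob(i\in\cW_0^+\cap\cA^+) \;=\; \frac{1-2^{-(l-1)}}{2(l-1)}, \]
which is $\Theta(1/d)$ in the regime $l\le d$, with $kp$ comfortably inside the window $[k/(4d),\,k/d]$.

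Finally, since the pairs $\{(\wvec{0}_i,\ai{0}_i)\}_{i=1}^{k}$ are i.i.d., the count $|\cW_0^+\cap\cA^+|$ is a sum of $k$ i.i.d.\ Bernoulli$(p)$ variables. Hoeffding's inequality gives deviations from $kp$ of order $\sqrt{k\log(1/\delta)}$, and the overparameterization $k > 8d^3$ makes this deviation $o(k/d)$, placing the count inside $[k/(4d),\,k/d]$ with failure probability at most $2e^{-\Omega(k/d^2)}\le 2e^{-d}$. The analogous argument for $\cW_0^-\cap\cA^-$ together with a union bound completes the proof. The main subtlety is the bookkeeping in the per-filter step: fitting the exact probability $(1-2^{-(l-1)})/(2(l-1))$ into both endpoints of the window $[1/(4d),\,1/d]$ requires care with constants (using $3\le l\le d$), but the concentration step is routine given the strong overparameterization.
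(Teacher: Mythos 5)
Your proposal is correct and follows essentially the same route as the paper: compute $\prob\left(i\in\cW^+_0\cap\cA^+\right)=\frac{1-2^{-(d-1)}}{2(d-1)}$ via sign-symmetry and exchangeability of the projections $\wvec{0}_i\cdot\oo_j$, apply Hoeffding with deviation $\frac{k}{4d}$ (which $k>8d^3$ turns into failure probability $2e^{-d}$), and finish by symmetry and a union bound. One caveat: fitting $kp$ into the window so that both endpoints $\frac{k}{4d}$ and $\frac{k}{d}$ hold really requires $l=d$ (as the paper implicitly assumes in writing $d-1$ for $\left|\cO\setminus\{2\}\right|$), not merely $3\le l\le d$ as you suggest --- for $l$ much smaller than $d$ one has $p\approx\frac{1}{2(l-1)}\gg\frac{3}{4d}$ and the upper bound $\left|\cW^+_0\cap\cA^+\right|\le\frac{k}{d}$ fails.
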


The proof uses the fact that $\prob\left(i \in \cW^+_0\cap \cA^+\right) = \frac{\left(1-2^{-d+1}\right)}{2(d-1)}$. Then, by concentration of measure for $k \ge d^3$, roughly $\frac{k}{2d}$ filters will be in $\cW^+_0\cap \cA^+$. The same argument holds for $\cW^-_0\cap \cA^-$. The proof is given in Section \ref{sec:thm_part1}.

\underline{\textbf{Part 2: First Layer Dynamics}:}

The following lemma shows the dynamics of the ``lucky'' neurons that detect the positive patterns.
\begin{lem}
\label{LEM:DYNAMICS_POS}
For all $0 \le t \le T_1$ and all $i \in \cW^+_0 \cap \cA^+$ the following holds:
\begin{enumerate}
    \item $\oo_1 \cdot \wvec{t}_i \ge \frac{t\eta_1}{9}$.
    \item For all $j \neq 1$, it holds that $\oo_j \cdot \wvec{t}_i \le r$.
\end{enumerate}
Furthermore, for all $\xx_+ \in S_1^+$, $\pwvec{t}_i(\xx_+) = \oo_1$.
\end{lem}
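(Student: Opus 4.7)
The plan is to prove both statements by simultaneous induction on $t$. The base case $t=0$ is immediate: $\oo_1 \cdot \wvec{0}_i > 0$ holds by the defining property of $\cW^+_0$, while Cauchy--Schwarz combined with $\|\wvec{0}_i\|=r$ gives $\oo_j \cdot \wvec{0}_i \le r$ for every $j$. The ``furthermore'' clause holds at $t=0$ because every $\xx_+ \in S_1^+$ contains only $\oo_1$ and patterns $\oo_{i_j}$ with $i_j \in \cS \subseteq \cO \setminus \{1,2\}$, and the $\argmax$ condition in the definition of $\cW^+_0$ picks out $\oo_1$ among exactly these patterns.

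For the inductive step, I would differentiate $\cL_1$ through the max-pooling unit using \eqref{eq:max_pattern}. Because $i \in \cA^+$ gives $a_i^{(0)}=1$, the update takes the form
\[
\wvec{t+1}_i - \wvec{t}_i = \frac{\eta_1}{m_1}\Bigl[\sum_{\xx_+ \in S_1^+} q_+(\xx_+)\,\pwvec{t}_i(\xx_+) - \sum_{\xx_- \in S_1^-} q_-(\xx_-)\,\pwvec{t}_i(\xx_-)\Bigr],
\]
where $q_{\pm}(\xx) := (1+e^{\pm \cnn(\xx)})^{-1} \in (0,1)$. The inductive hypothesis gives $\pwvec{t}_i(\xx_+)=\oo_1$ for every positive sample, and $\pwvec{t}_i(\xx_-) \in \{0,\oo_2,\oo_3,\ldots,\oo_l\}$ for every negative sample (since $\oo_1 \notin \xx_-$). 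Projecting the update onto the orthogonal basis $\cO$ cleanly decouples the two conclusions: only the $\xx_+$-sum can contribute to $\oo_1 \cdot (\wvec{t+1}_i - \wvec{t}_i)$, and only the $\xx_-$-sum can contribute to $\oo_j \cdot (\wvec{t+1}_i - \wvec{t}_i)$ for $j \neq 1$. Since the latter appears with a minus sign, the second conclusion $\oo_j \cdot \wvec{t+1}_i \le \oo_j \cdot \wvec{t}_i \le r$ is automatic.

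For the first conclusion, I need $\tfrac{\eta_1 |S_1^+|}{m_1}\,q_+ \ge \eta_1/9$. This rests on two estimates. First, a telescoping norm bound $\|\wvec{s}_i\| \le r + s \eta_1$ for all $s \le T_1$ (each gradient has norm at most $1$, because $|\ell'|\le 1$ and $\|\pwvec{s}_i(\xx)\|\le 1$); combined with $|a_i|=1$ and the hypotheses $\eta_1 \le 1/(4k(T_1+1))$ and $r \le \eta_1/200$, this yields $|\cnn(\xx)| \le k(r+T_1\eta_1) < 1$, hence $q_+ \ge 1/(1+e)$. Second, a Hoeffding bound (absorbed into the $e^{-m/36}$ confidence factor of Theorem~\ref{thm:conv}) gives $|S_1^+|/m_1 \ge (1+e)/9$. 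Combining, $\oo_1 \cdot \wvec{t+1}_i \ge \oo_1 \cdot \wvec{t}_i + \eta_1/9 \ge (t+1)\eta_1/9$. To propagate the ``furthermore'' clause, observe $(t+1)\eta_1/9 > r \ge \oo_{i_j}\cdot \wvec{t+1}_i$ for every spurious $\oo_{i_j}$, and $\oo_1 \cdot \wvec{t+1}_i > 0$, so $\pwvec{t+1}_i(\xx_+)=\oo_1$, closing the induction.

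The main obstacle is bookkeeping rather than a deep idea: one must budget the constants across the overparameterization, step-size, and initialization-radius assumptions so that (i) $|\cnn(\xx)|$ remains in a regime where $\ell'$ is bounded away from zero uniformly over the $T_1$ iterations, and (ii) the Hoeffding deviation for $|S_1^+|/m_1$ is sharp enough to deliver the target growth rate $\eta_1/9$ per step. The challenge is to keep the three conditions $\eta_1 \le 1/(4k(T_1+1))$, $r \le \eta_1/200$, and $k > 8d^3$ mutually consistent with the induction's arithmetic, while also leaving enough slack to accommodate the separate analysis of the ``non-lucky'' filters and the second-layer dynamics later in the proof.
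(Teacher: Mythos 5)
Your proposal is correct and follows essentially the same route as the paper: a simultaneous induction on the three claims, using orthogonality to decouple the positive- and negative-sample contributions to the gradient update, a uniform bound on $\left|\cnn(\xx)\right|$ (via the filter-norm bound $\|\wvec{t}_i\|\le r+t\eta_1$ and the step-size assumption) to keep the loss derivative bounded away from zero, and a Hoeffding bound on the class balance, exactly as in the paper's appendix. The only deviation is in the constant bookkeeping: the paper bounds $\left|\cnn(\xx)\right|\le 1/2$ to get a derivative lower bound of $1/(1+\sqrt{e})\ge 1/3$ paired with $|S_1^+|/m_1\ge 1/3$, whereas your split ($q_+\ge 1/(1+e)$ with $|S_1^+|/m_1\ge(1+e)/9\approx 0.41$) demands a tighter concentration event than the stated $e^{-m/36}$ — easily fixed by using your own sharper bound $\left|\cnn(\xx)\right|\le k\eta_1(T_1+1)\le 1/4$.
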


The lemma shows that the projection of the filter on $\oo_1$ grows significantly, while the projection on other $\oo_i$ remains small. Finally, it shows that for any positive point in $S_1$, the pattern which maximally activates the filter is $\oo_1$. Thus, the filter is correctly detecting the positive pattern. The proof is technical and shows that the properties above hold by induction on $t$. It is given in Section \ref{sec:proof_dynamics_first_layer}.

By the symmetry of our setting we get by Lemma \ref{LEM:DYNAMICS_POS} a similar result for the ``lucky'' neurons that detect negative patterns.

\begin{cor}
\label{cor:neg}
With probability at least $1- 4e^{-d} - 4e^{-\frac{m}{36}}$, for all $0 \le t \le T_1$ and all $i \in \cW^-_0 \cap \cA^-$ the following holds:
\begin{enumerate}
    \item $\oo_2 \cdot \wvec{t}_i \ge \frac{t\eta_1}{9}$.
    \item For all $j \neq 2$, it holds that $\oo_j \cdot \wvec{t}_i \le r$.
\end{enumerate}
Furthermore, for all $\xx_- \in S_1^-$, $\pwvec{t}_i(\xx_-) = \oo_2$.
\end{cor}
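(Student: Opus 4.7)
The plan is to deduce the corollary from Lemma \ref{LEM:DYNAMICS_POS} by exploiting a symmetry of the setup under the exchange of the positive and negative patterns, together with a Hoeffding-type concentration step on the balance of positive and negative samples in $S_1$. Concretely, let $\Phi$ be the joint involution that simultaneously (i) swaps $\oo_1$ and $\oo_2$ wherever they occur in an input $\xx$, (ii) flips the label $y \mapsto -y$, and (iii) flips each second-layer initialization $\ai{0}_i \mapsto -\ai{0}_i$. The first step is to check that $\Phi$ is measure-preserving for the joint law of $(S, \wmat{0}, \aai{0})$: by construction of $\cD$, the positive and negative patterns enter the sampling procedure symmetrically, so (i)+(ii) preserves $\cD$; the law of $\wmat{0}$ is untouched; and the law of $\aai{0}$ is invariant under a global sign flip since each $\ai{0}_i$ is uniform on $\{\pm 1\}$ and independent of $\wmat{0}$.

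The second step is to verify that the first-layer iterates of $\layeralg$ commute with $\Phi$. Since $\cnnth{(W,-\aai{0})}(\Phi(\xx)) = -\cnnth{(W,\aai{0})}(\xx)$ by \eqref{eq:network} (swapping $\oo_1 \leftrightarrow \oo_2$ in $\xx$ does not change the max over $j$, and flipping $\aa$ flips the outer sum), the product $y\cdot \cnn(\xx)$ is invariant under $\Phi$, and therefore so are $\ell(y\cdot\cnn(\xx))$ and $\cL_1$. Consequently $\partial \cL_1/\partial W$ is unchanged, so the $W$-iterates produced by $\layeralg$ on the $\Phi$-transformed inputs equal those produced on the original inputs. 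Under $\Phi$ the set $\cW^+_0\cap \cA^+$ maps bijectively onto $\cW^-_0\cap \cA^-$, the roles of $\oo_1$ and $\oo_2$ are exchanged, and $S_1^+$ is sent to $S_1^-$. Applying Lemma \ref{LEM:DYNAMICS_POS} in the transformed picture and pulling back by $\Phi^{-1}=\Phi$ yields exactly the three claims of the corollary: $\oo_2\cdot \wvec{t}_i \ge t\eta_1/9$, $\oo_j\cdot \wvec{t}_i\le r$ for $j\neq 2$, and $\pwvec{t}_i(\xx_-)=\oo_2$ for every $\xx_- \in S_1^-$.

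The third step is probability bookkeeping. The conclusion of Lemma \ref{LEM:DYNAMICS_POS} is carried on the initialization event of probability $\ge 1-4e^{-d}$ coming from Lemma \ref{LEM:INIT_GOOD_FILTERS}, and on a sample-balance event of the form $|S_1^+| \ge c\, m_1$ that the proof of Lemma \ref{LEM:DYNAMICS_POS} requires in order for enough positive samples to drive the gradient (Hoeffding's inequality gives this with probability at least $1-2e^{-m/36}$, after accounting for $m_1=\lceil m/2 \rceil$). The symmetric application demands, in addition, the analogous event $|S_1^-|\ge c\,m_1$, which is another $2e^{-m/36}$ term. A union bound over the initialization event, the positive-sample event, and the negative-sample event yields the stated $1-4e^{-d}-4e^{-m/36}$ guarantee.

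The main point requiring care, rather than a genuine obstacle, is the clean verification that $\Phi$ simultaneously preserves (a) the data distribution $\cD$, (b) the law of the initialization $(\wmat{0},\aai{0})$, and (c) the first-layer gradient dynamics. Once items (a)--(c) are established, the three claims are literally Lemma \ref{LEM:DYNAMICS_POS} applied to the transformed problem, and no additional analysis of the first-layer dynamics is needed.
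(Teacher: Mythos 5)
The paper itself disposes of this corollary with the single sentence ``by the symmetry of our setting,'' so your instinct to formalize that symmetry as a measure-preserving involution plus an appeal to Lemma \ref{LEM:DYNAMICS_POS} is exactly the intended route, and your probability bookkeeping matches the paper's (the event $\left|S_1^+\right|,\left|S_1^-\right| \ge m_1/3$ is already the one conditioned on in the proof of Lemma \ref{LEM:DYNAMICS_POS}). However, your involution $\Phi$ is missing one ingredient, and without it two of your key steps fail. Because $\Phi$ acts on $(\xx,y,\aai{0})$ but not on $W$, the identity $\cnnth{(W,-\aai{0})}(\Phi(\xx)) = -\cnnth{(W,\aai{0})}(\xx)$ is false in general: for a fixed filter $\ww_i$ one typically has $\ww_i\cdot\oo_1 \neq \ww_i\cdot\oo_2$, so replacing the single occurrence of $\oo_1$ in a positive sample by $\oo_2$ \emph{does} change $\max_j\sigma(\ww_i\cdot\xx[j])$ (e.g.\ the max can move to a spurious pattern). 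For the same reason your claim that $\Phi$ maps $\cW^+_0\cap\cA^+$ bijectively onto $\cW^-_0\cap\cA^-$ cannot hold: $\cW^{\pm}_0$ are determined by $\wmat{0}$ alone, which your $\Phi$ leaves fixed, so $\cW^+_0\cap\cA^+$ is sent to $\cW^+_0\cap\cA^-$.

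The fix is to enlarge $\Phi$ to also act on the first layer: let $R$ be the orthogonal involution with $R\oo_1=\oo_2$, $R\oo_2=\oo_1$ and $R=\mathrm{id}$ on $\mathrm{span}(\oo_1,\oo_2)^\perp$, and let $\Phi$ additionally map each row $\wvec{0}_i\mapsto R\wvec{0}_i$. Then $R\ww_i\cdot R\pp=\ww_i\cdot\pp$ restores the output identity, the uniform law on the radius-$r$ sphere is $R$-invariant, the gradient of $\cL_1$ transforms covariantly so the iterates satisfy $\tilde{\ww}^{(t)}_i=R\wvec{t}_i$ for all $t$, and $i\in\cW^-_0\cap\cA^-$ in the original problem iff $i\in\cW^+_0\cap\cA^+$ in the transformed one. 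Applying Lemma \ref{LEM:DYNAMICS_POS} to the transformed problem and using $\oo_1\cdot R\wvec{t}_i=\oo_2\cdot\wvec{t}_i$ (and $\oo_j\cdot R\wvec{t}_i = \oo_{\pi(j)}\cdot\wvec{t}_i$ for the transposition $\pi=(1\,2)$) then yields exactly the three claims. With that correction your argument is a valid, and more explicit, version of the paper's one-line proof.
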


Finally, we provide a simple bound on the output of all neurons (including the "non-lucky" ones).

\begin{lem}
\label{lem:second_layer__features_norm}
For all $1 \le t \le T_1$, $1 \le i \le k$, $1 \le j \le d$ and $\xx$ sampled from $\cD$, it holds that $\xx[j] \cdot \wvec{t}_i \le 2 \eta_1 t$.
\end{lem}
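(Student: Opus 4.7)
The plan is to bound the norm of each filter $\wvec{t}_i$ by controlling the per-step gradient norm, and then apply Cauchy--Schwarz, since each pattern $\xx[j] \in \cO$ has unit norm.

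First I would compute the gradient of $\cL_1$ with respect to $\ww_i$. Using the chain rule together with \eqref{eq:max_pattern}, a (sub)gradient is
\begin{equation*}
\frac{\partial \cL_1}{\partial \ww_i}(W,\aai{0}) = \frac{1}{m_1}\sum_{(\xx,y)\in S_1} \ell'\!\left(y \cnn(\xx)\right) y\, \ai{0}_i\, \mathbb{1}\!\left[\ww_i\cdot \pwvec{}_i(\xx) > 0\right] \pwvec{}_i(\xx).
\end{equation*}
Since $\ell'(z) = -\frac{1}{1+e^{z}}$ satisfies $|\ell'(z)| \le 1$, and since $|\ai{0}_i| = 1$ by initialization and $\|\pwvec{}_i(\xx)\| \le 1$ (the patterns in $\cO$ are unit vectors, and $\pwvec{}_i(\xx) \in \cO \cup \{0\}$), the triangle inequality over the $m_1$ summands shows that $\left\| \frac{\partial \cL_1}{\partial \ww_i}(W,\aai{0}) \right\| \le 1$ uniformly in $W$.

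Next, from the update rule $\wvec{t}_i = \wvec{t-1}_i - \eta_1 \frac{\partial \cL_1}{\partial \ww_i}(\wmat{t-1}, \aai{0})$ I obtain $\|\wvec{t}_i - \wvec{t-1}_i\| \le \eta_1$, and hence by a telescoping triangle inequality,
\begin{equation*}
\|\wvec{t}_i\| \le \|\wvec{0}_i\| + t\eta_1 = r + t\eta_1.
\end{equation*}

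Finally, for any pattern $\xx[j]$ appearing in a sample from $\cD$, we have $\xx[j] \in \cO$, so $\|\xx[j]\| = 1$. By Cauchy--Schwarz,
\begin{equation*}
\xx[j] \cdot \wvec{t}_i \le \|\xx[j]\|\cdot \|\wvec{t}_i\| \le r + t\eta_1 \le \tfrac{\eta_1}{200} + t\eta_1 \le 2t\eta_1,
\end{equation*}
where the last inequality uses the hypothesis $r \le \eta_1/200$ together with $t \ge 1$. There is no real obstacle here: the only nontrivial ingredient is the uniform bound $|\ell'| \le 1$ on the logistic derivative, which makes the gradient norm at most $1$ regardless of $W$; everything else is triangle inequality and Cauchy--Schwarz.
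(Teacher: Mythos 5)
Your proof is correct and follows essentially the same route as the paper: the paper's Lemma \ref{lem:filter_norm} establishes $\left\|\wvec{t}_i\right\| \le r + \eta_1 t \le \eta_1(t+1)$ by exactly the gradient-norm-at-most-one argument you give, and the lemma then follows by Cauchy--Schwarz with $\left\|\xx[j]\right\|=1$ and $r \le \eta_1$. Your write-up just makes the gradient computation and the final $r + t\eta_1 \le 2t\eta_1$ step (valid since $t \ge 1$) more explicit.
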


The proof is given in Section \ref{sec:proof_simple_bound}.

\underline{\textbf{Part 3: Optimizing the Second Layer}:}

We conclude the proof of the theorem by analyzing the optimization of the second layer. Here we sketch the analysis and defer the details to Section \ref{sec:proof_part3}.

For each $\xx$ sampled from $\cD$, we define $\zz(\xx) \in \reals^k$ such that for all $1 \le i \le k$, its $i$th entry is $\zz(\xx)_i = \max_j\left\{\sigma\left(\wvec{T_1}_i\cdot \xx[j] \right)\right\}$ (namely, these are the values of the output of the pooling of each channel, which serve as features for the second layer). Then, we define a new distribution of points $\cD_{\zz}$ over $\reals^k \times \{\pm 1\}$, which samples a point $(\zz(\xx),y)$ where $(\xx,y) \sim \cD$.

Using the results of the first layer dynamics, we show that $\cD_{\zz}$ is linearly separable and can be separated with margin 1 by a classifier $\vv$ with $\left\|\vv\right\| = O\left(\sqrt{\frac{d}{k}}\right)$. Then, we use recent results on logistic regression \citep{soudry2018implicit}, to show that by optimizing the second layer, $\layeralg$ will converge to a low norm solution with zero training loss. Finally, we apply norm-based generalization bounds \citep{shalev2014understanding}. Since for all $\xx$, $\left\|\zz(\xx)\right\| = O(\sqrt{k})$, we obtain a sample complexity guarantee for $\layeralg$ of order $O\left(\left\|\vv\right\|^2 \max_{\xx}\left\|\zz(\xx)\right\|^2\right) = O\left(d\right)$.
\end{proof}

\begin{figure*}[h]
	\begin{subfigure}{.32\textwidth}
		\centering
		\includegraphics[width=1.0\linewidth]{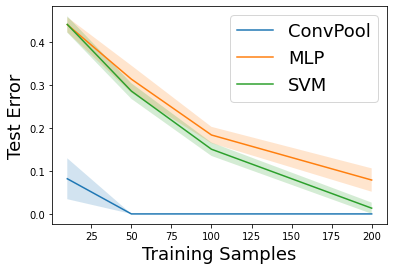}
		\caption{}
		\label{fig:sample_size_toy_small_eps}
	\end{subfigure}%
	\begin{subfigure}{.33\textwidth}
		\centering
		\includegraphics[width=1.0\linewidth]{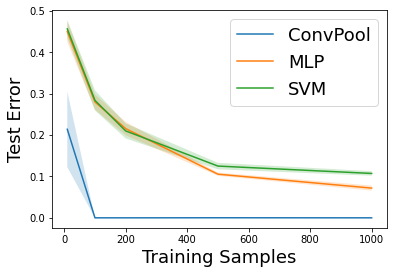}
		\caption{}
		\label{fig:sample_size_toy_large_eps}
	\end{subfigure}%
	\begin{subfigure}{.33\textwidth}
		\centering
		\includegraphics[width=1.0\linewidth]{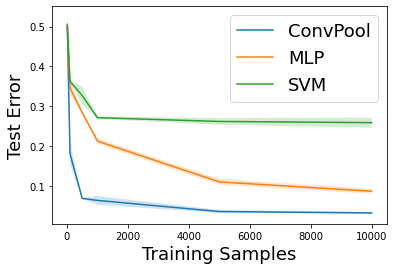}
		\caption{}
		\label{fig:mnist_exps}
	\end{subfigure}%

	\caption{Empirical evaluation of ConvPool architecture to different baselines. (a) Test error as a function of train sample size, for data sampled according to our pattern detection distribution. Note: this data is linearly separable in $\xx$ by construction, and we verified that all models had zero training error. (b) Test error as a function of train sample size, for data that as in (a), but with added noise vector $\vv$ with $\left\|\vv\right\| \le 1$. The resulting data is not linearly separable. Note: we verified that for the linear model training error was non-zero in many of the cases. For the other models training error was zero. (c) Results on an MNIST pattern detection problem. }
	\label{fig:psi_exps}
\end{figure*}

\section{Comparison with FCNs}
In the previous section we showed that overparameterized CNNs have good sample complexity for learning the pattern distributions $\cD$ in Section \ref{sec:problem_formulation}. How do overparameterized fully connected networks compare with CNNs in our setting?  To address this question, we apply recent results of \citet{brutzkus2018sgd}. They provide generalization guarantees for one-hidden layer overparameterized fully connected networks on linearly separable data. We will show that their bound for FC networks can be $O(d^{2r})$ for any $r \ge 1$. In contrast, Theorem \ref{thm:conv} shows a generalization bound for CNNs which is linear in $d$. We note that to fully demonstrate a gap between the methods we also need a lower bound on the FCN for the distribution $\cD$, and we leave this for future work. Nonetheless, we show empirically, that these generalization bounds predict the performance gap between CNNs and FCNs in our setting.

\commenteps{
First, we show that for a sufficiently small $\epsmax$, $\cD$ is linearly separable. Therefore, in this case we can compare their generalization guarantees with ours for CNNs. The pro

\begin{prop}
If $\epsmax < \frac{1}{2n}$ then $\cD$ is linearly separable.
\end{prop}
}

We begin by noting that the distribution $\cD$ is linearly separable in $\xx$, because one can set $\ww\in\reals^{n d}$ to be a concatenation of $n$ copies of the pattern difference $\oo_1-\oo_2$ and because of orthogonality this will correctly classify the data. We next explain how \citet{brutzkus2018sgd} can be used to obtain a sample complexity bound for learning this data with a fully connected leaky ReLU net.

Assume that $\cD$ is linearly separable with margin 1 by a classifier $\ww^*$, i.e., for all $(\xx,y) \in \cD$, $y\ww^*\cdot \xx \ge 1$. In \citet{brutzkus2018sgd} they consider the following fully connected network:
\begin{equation}
\label{eq:fc_network}
\fcth{\thth}(\xx)=
\sum_{i=1}^{k}a_i\psi\left(\ww_i\cdot \xx\right)
\end{equation}

for $\thth = (W,\aa)$ where in our setting $\ww_i \in \reals^{nd}$ is the $i$th row of $W \in \reals^{k\times nd}$, $\aa \in  \reals^k$ and $\xx \in \reals^{nd}$. $\psi(x) = \max\{\alpha x,x\}$ is the Leaky ReLU activation. 

They show that SGD converges to a zero training error solution with sample complexity of $O(\left\|\ww^*\right\|^2 R^2)$, where $R$ is the maximum norm of the data, $R=\max_{\xx}{\left\|\xx\right\|}$. In our setting it holds that $R^2 = n$ (because each point $\xx$ consists of $n$ patterns, each of norm $1$). Importantly, this bound is independent of the network size $k$. 

We note that the bound $O(\left\|\ww^*\right\|^2 R^2)$ also holds for the hard-margin linear SVM \citep{shalev2014understanding}. Therefore, our following conclusions hold for this algorithm as well. In the next section we show experiments that compare CNNs, FCNs and SVMs in our setting and  corroborate our findings.

The generalization bound of $O(\left\|\ww^*\right\|^2 R^2)$ holds for \textit{any} $\ww^*$ which separates with margin 1. Thus, the best bound can be achieved with $\ww^*$ that has the lowest norm and separates the data with margin 1. Next we show that the lowest norm is at least $\sqrt{n}$.

\begin{prop}
\label{prop:linear_norm}
Define
\begin{equation}
    \hat{\ww} = \argmin_{\ww \in \reals^{nd}}\left\|\ww\right\|^2\,\,\text{s.t.}\,\,\,\,\forall (\xx,y)\sim \cD \,\,\,\,\,y\ww \cdot \xx \ge 1
\end{equation}
Then $\left\|\hat{\ww}\right\|^2 \ge  n$.
\end{prop}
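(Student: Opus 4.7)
The plan is to extract, for every position $j\in\{1,\ldots,n\}$, a lower bound $\|\ww[j]\|^2\ge 2$ by testing the separator on a single pair of positive and negative samples that differ only at coordinate $j$. Summing the per-block bounds then yields the claim (in fact a strictly stronger bound). I write $\ww=(\ww[1],\ldots,\ww[n])$ with $\ww[j]\in\reals^d$, so that $\|\ww\|^2=\sum_{j=1}^n\|\ww[j]\|^2$.

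For each $j$, I would construct $\xx_+^{(j)},\xx_-^{(j)}\in\reals^{nd}$ in the support of $\cD$ as follows: place $\oo_1$ (resp.\ $\oo_2$) at position $j$, and place the same spurious pattern $\oo_3$ at every other position. The assumption $l\ge 3$ guarantees that $\oo_3\in\cO$, so both samples lie in the support of $\cD$ (positive and negative, respectively). Because the two vectors agree on all coordinates off position $j$, every spurious cross-term with $\ww$ cancels, and the margin constraints $\ww\cdot\xx_+^{(j)}\ge 1$ and $\ww\cdot\xx_-^{(j)}\le -1$ combine to give
\[
\ww[j]\cdot(\oo_1-\oo_2) \;=\; \ww\cdot\bigl(\xx_+^{(j)}-\xx_-^{(j)}\bigr) \;\ge\; 1-(-1) \;=\; 2.
\]

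Next, the orthonormality of $\cO$ yields $\|\oo_1-\oo_2\|^2=2$, so Cauchy--Schwarz gives
\[
\|\ww[j]\|^2 \;\ge\; \frac{\bigl(\ww[j]\cdot(\oo_1-\oo_2)\bigr)^2}{\|\oo_1-\oo_2\|^2} \;\ge\; \frac{4}{2} \;=\; 2.
\]
Summing over the $n$ positions gives $\|\ww\|^2\ge 2n$, which is strictly stronger than the stated bound $\|\ww\|^2\ge n$.

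I do not anticipate a genuine obstacle: the whole argument rests on the observation that repeating a common spurious pattern off position $j$ in both samples isolates the contribution of $\ww[j]$ to the margin gap. The mild slack between the easy bound $2n$ and the stated bound $n$ suggests the authors were content with the cleaner constant, which is all that is needed for the subsequent comparison of $\|\hat\ww\|^2$ with the $O(d)$ sample complexity of Theorem \ref{thm:conv}.
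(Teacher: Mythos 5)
Your proposal is correct and uses essentially the same argument as the paper: the paper also tests $\hat{\ww}$ on a positive/negative pair that places $\oo_1$ versus $\oo_2$ at a single position and the common spurious pattern $\oo_3$ everywhere else, subtracts the two margin constraints to get $\hat{\ww}[i]\cdot(\oo_1-\oo_2)\ge 2$, and concludes via Cauchy--Schwarz (the paper phrases it as a proof by contradiction applied to one block, and uses the looser bound $\left|\hat{\ww}[i]\cdot(\oo_1-\oo_2)\right|\le 2\left\|\hat{\ww}[i]\right\|$, which is why it only states $\left\|\hat{\ww}\right\|^2\ge n$). Your direct version, applied to every block with the sharp constant $\left\|\oo_1-\oo_2\right\|^2=2$, legitimately yields the stronger bound $2n$, but this does not change the downstream $O(n^2)$ conclusion.
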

\begin{proof}
Assume by contradiction that $\left\|\hat{\ww}\right\|^2 = \sum_{1 \le i \le n} \left\|\hat{\ww}[i]\right\|^2 < n$. Then, there exists $1 \le i \le n$ such that $\left\|\hat{\ww}[i]\right\|^2 < 1$. Define a positive point $(\xx_+, 1)$ such that $\xx_+[i] = \oo_1$ and $\xx_+[j] = \oo_3$ for $j\neq i$. Similarly, define a negative point $(\xx_-, -1)$ such that $\xx_-[i] = \oo_2$ and $\xx_-[j] = \oo_3$ for $j\neq i$. Then it holds that:
\begin{align*}
\label{eq:pos_fc}
\hat{\ww} \cdot \xx_+ &= \sum_{1 \le j \le n}\hat{\ww}[j]\xx_+[j] = \oo_1 \cdot \hat{\ww}[i] + \sum_{j \neq i}\hat{\ww}[j]\cdot \oo_3  \ge 1 \numberthis        
\end{align*}
and similarly
\begin{align*}
\label{eq:neg_fc}
\hat{\ww} \cdot \xx_- = \oo_2 \cdot \hat{\ww}[i] + \sum_{j \neq i}\hat{\ww}[j]\cdot \oo_3  \le -1 \numberthis        
\end{align*}
By subtracting \eqref{eq:neg_fc} from \eqref{eq:pos_fc} we get:
\begin{equation}
\label{eq:contradiction}
    \hat{\ww}[i] \cdot (\oo_1 - \oo_2) \ge 2
\end{equation}
but since $\left|\hat{\ww}[i] \cdot (\oo_1 - \oo_2)\right| \le 2\left\|\hat{\ww}[i]\right\|$, we have by \eqref{eq:contradiction}  $\left\|\hat{\ww}[i]\right\| \ge 1$, which is a contradiction.
\end{proof}

Proposition \ref{prop:linear_norm} implies that the best possible bound of \citet{brutzkus2018sgd} for FC networks, or margin bound for linear SVM is $O(n^2)$ in our setting. Thus for $n = \Theta(d^r)$, $r\ge 1$ the bounds for FC networks and linear SVM are $O(d^{2r})$. In contrast, Theorem \ref{thm:conv} shows a generalization guarantee for CNNs of $O(d)$ for any $n$. This gap suggests that CNNs should significantly outperform FCNs and linear SVM in our setting. Next, we provide empirical evidence for this.

\ignore{
We first show the following lemma.
\begin{lem}
For all $j_1 \neq j_2$, $\left\|\hat{\ww}[j_1]\right\| = \left\|\hat{\ww}[j_2]\right\|$
\end{lem}
\begin{proof}

Assume that this does not hold, i.e., there exists $j_1 \neq j_2$ such that $\left\|\hat{\ww}[j_1]\right\| \neq \left\|\hat{\ww}[j_2]\right\|$.
Define $\uu \in \reals^{nd}$ such that $\uu[i] = \hat{\ww}[i]$ for all $i \neq j_1,j_2$ and $\uu[j_1] = \uu[j_2] = \frac{\hat{\ww}[j_1] + \hat{\ww}[j_2]}{2}$. Then:
\begin{align*}
   \left\|\uu\right\|^2 &= \sum_{1 \le i \le n} \left\|\uu[i]\right\|^2 \\ &= \sum_{ i \neq j_1,j_2} \left\|\hat{\ww}[i]\right\|^2 + 2\left\| \frac{\hat{\ww}[j_1] + \hat{\ww}[j_2]}{2}\right\|^2 \\ &= \sum_{ i \neq j_1,j_2} \left\|\hat{\ww}[i]\right\|^2 + \frac{\left\|\hat{\ww}[j_1]\right\|^2 }{2} + \hat{\ww}[j_1] \cdot \hat{\ww}[j_2] +  \frac{\left\|\hat{\ww}[j_2]\right\|^2 }{2}
   \\ &\le \sum_{ i \neq j_1,j_2} \left\|\hat{\ww}[i]\right\|^2 + \frac{\left\|\hat{\ww}[j_1]\right\|^2 }{2} \\ &+ \left\|\hat{\ww}[j_1]\right\|\left\|\hat{\ww}[j_2]\right\| +  \frac{\left\|\hat{\ww}[j_2]\right\|^2 }{2} \\ &< \sum_{1 \le i \le n} \left\|\hat{\ww}[i]\right\|^2 = \left\|\hat{\ww}\right\|^2
\end{align*}
where in the first inequality we applied th Cauchy-Shwartz inequality and in the second inequality we used the inequality $\frac{a^2}{2} +ab +\frac{b^2}{2}< a^2+b^2$ for $a \neq b$.

Next, we will show that $\forall (\xx,y)\sim \cD \,\,\,\,\,y\uu \cdot \xx \ge 1$, which will contradict the optimality of $\hat{\ww}$. Indeed, let $(\xx,y)\sim \cD$. Then: 
\begin{equation}
\label{eq:point_margin}
y\sum_{1 \le i \le n}\hat{\ww}[i]\xx[i] \ge 1    
\end{equation}
 Let $\zz \in \reals^{nd}$ such that $\zz[j_1] = \xx[j_2]$, $\zz[j_2] = \xx[j_1]$ and $\zz[i] = \xx[i]$ for all $i \neq j_1,j_2$. It holds that $(\zz,y)$ is in the support of $\cD$ as well and therefore:
\begin{align*}
\label{eq:swapped_point_margin}
  y\sum_{1 \le i \le n}\hat{\ww}[i]\zz[i] &= y\sum_{i \neq j_1,j_2}\hat{\ww}[i]\xx[i] \\ &+ y\hat{\ww}[j_1]\xx[j_2] + y\hat{\ww}[j_2]\xx[j_1] \ge 1 \numberthis
\end{align*}

Summing \eqref{eq:point_margin} and \eqref{eq:swapped_point_margin} and dividing by 2, we get:
\begin{align*}
   &y\sum_{i \neq j_1,j_2}\hat{\ww}[i]\xx[i] + y\frac{\hat{\ww}[j_1] + \hat{\ww}[j_2]}{2}\xx[j_2] \\ &+ y\frac{\hat{\ww}[j_1] + \hat{\ww}[j_2]}{2}\xx[j_1] \ge 1
\end{align*}
which is equivalent to $y\uu\cdot \xx \ge 1$. This concludes the proof.
\end{proof}

Next, we show the following lemma.
\begin{lem}
Assume that for $\ww \in \reals^{nd}$, for all $j_1 \neq j_2$ $\left\|\ww[j_1]\right\| = \left\|\ww[j_2]\right\|$ and $\forall (\xx,y)\sim \cD \,\,\,\,\,y\ww \cdot \xx \ge 1$. Then $\left\|\ww\right\|^2 \ge \frac{n}{4}$.
\end{lem}
\begin{proof}
Assume by contradiction that $\left\|\ww\right\|^2 < \frac{n}{4}$. Then, by our assumption it follows that $\left\|\ww[i]\right\|^2 <  \frac{1}{4}$ for all $1 \le i \le n$. But then for all $(\xx,y)\sim \cD$:
\begin{equation}
    \left|y\ww \cdot \xx\right| = \left|\sum_{1 \le i \le n}\ww[i]\xx[i]\right| \le (1+\epsmax)\left\|\ww[i]\right\| < \frac{1+\epsmax}{2} < 1
\end{equation}
which is a contradiction.
\end{proof}
}

\section{Experiments}
\label{sec:exps}
In this section we provide empirical evaluation of learning with our pooling architecture and compare it to several other models. As baselines we consider:
\begin{itemize}
    \item {\bf{ConvPool}}: Our convolution and max-pooling model in \eqref{eq:network}. We verified that layer-wise training performs very similarly to standard  training, and thus we report results on standard training with Adam \citep{kingma2014adam} in what follows.
    \item {\bf{MLP}}: A standard fully connected neural network with one hidden layer. The network receives the complete $\xx$ as input (with all patterns). We use a number of hidden neurons that results in the same number of parameters as {\bf ConvPool}.
    \item {\bf{SVM}}: A hard-margin linear SVM with $\xx$ as input. This will return zero training errors only when the data is linearly separable. This is the case for our distribution $\cD$, but no longer the case when we add noise to the patterns (see below). 
\end{itemize}
All experiments used a test set of size $1000$, and were repeated $5$ times with mean and std reported on figures.

We begin with a toy data setting. We created data for a detection problem where all $\oo\in\mathbb{R}^{20}$ vectors were uniformly sampled from the rows of a uniformly sampled orthogonal matrix and $n=10$. ConvPool used 500 channels. 
Figure \ref{fig:sample_size_toy_small_eps} shows results for this setting. ConvPool can be seen to outperform the other methods. In Figure \ref{fig:sample_size_toy_large_eps} we go beyond our analyzed setting, and add independent random noise $\vv$ to each pattern where $\left\|\vv\right\| \le 1.0$. This makes the problem non linearly-separable. As expected, the linear method now fails, but ConvPool performs well and outperforms MLP.

Next, we consider the effect of the number of patterns $n$ on performance. As shown in Proposition \ref{prop:linear_norm}, the norm of the max-margin linear classifier is lower bounded by $n$. Thus, increasing $n$ is expected to result in worse performance for MLP and SVM by the results in the previous section. In Figure \ref{fig:nun_patterns}, we vary the number of patterns, and indeed observe that performances of MLP and SVM deteriorate while that of ConvPool is only mildly affected (we used the same parameters as above and  noise level $\left\|\vv\right\| \le 1$). 

Finally, we evaluate on the MNIST data set. We create a detection problem as in \figref{fig:mnist} where the discriminative patterns are the digits three and five and the spurious patterns are all other digits. Each input image contains four patterns (i.e., four digits). We used a relatively small number of patterns to make the problem not linearly separable for moderate sample sizes. We trained a 3 layer convolutional network as in \eqref{eq:network} with 500 channels. Results in \figref{fig:mnist_exps} again show excellent performance of the pooling model compared to the baselines.

\begin{figure}[t!]
    \centering
		\includegraphics[width=1.0\linewidth]{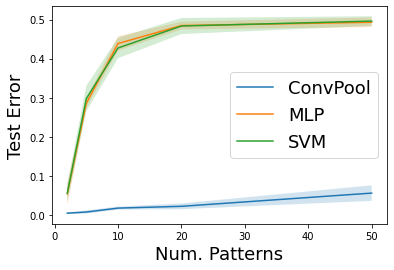}
	\caption{The effect of changing $n$, the number of patterns per image. It can be seen that this deteriorates the performance of the other methods while only mildly affecting the pooling model.}
	\label{fig:nun_patterns}
\end{figure}

\ignore{
Compare: SGD on Conv, Layerwise on Conv, NTK, Linear on Full, ReLU on Full 
Make sure to monitor training error.
\begin{itemize}
    \item Change number of samples (keep the layerwise Conv num channels very big so we are in over-parameterized)
    \item Change number of channels (want to show that generalization is not affected)
    \item Change epsilon (want to show breakdown at some epsilon level. breakdown in expressivity, optimization or generalization?)
\end{itemize}

Figures:

\begin{itemize}
    \item Figure 1: compare test error of conv (with Adam), MLP, SVM on linearly separable data (epsilon = 0). With mean and std.
    \item Figure 2: compare test error conv and MLP on nonlinear data. $\epsmin > \frac{2}{n}$. With mean and std.
    \item Figure 3: 
\end{itemize}
}

\section{Discussion}
In this paper we presented the first analysis of a convolutional max-pooling architecture in terms of optimization and generalization under over-parameterization. Our analysis
is for a natural setting of a detection problem where certain patterns ``identify'' the class and the others are irrelevant. Our analysis predicts a significant performance gap between CNNs and FCNs, which we observe in experiments.

While our analysis is the first step towards understanding pattern detection architectures, many open problems remain. The first is extending the pattern structure from orthogonal patterns to more general distributions. For example, we can consider the discriminative pattern to be a combination of patterns across the image (e.g., the class of the image is positive only if certain multiple patterns appear in the image). Second, it would be interesting to extend the convolution so that there are overlaps between filters (although this is known to generate local optima even for simpler settings \citep{brutzkus2017globally}). Finally, a challenging extension is to a multi-layer architecture with repeated application of pooling. 

\section*{Acknowledgements}
This research is supported by the European Research Council (ERC) under the European
Unions Horizon 2020 research and innovation programme
(grant ERC HOLI 819080). AB is supported by the Google Doctoral Fellowship in Machine Learning. 



\bibliography{pattern_detection}
\bibliographystyle{icml2020}

\newpage
\onecolumn
\appendix

\clearpage

\section{Convergence Rates for Theorem 5.1}
\label{sec:convergence_rates}

In \citet{ji2019refined}, Theorem 4.2, they show the following for logistic regression initialized at zero and a certain learning rate schedule. The margin of the learned classifier is $\frac{\gamma}{2}$ where $\gamma$ is the max-margin after $O\left(\frac{1}{\gamma^2}\right)$ iterations.\footnote{$O$ hides a dependency on $\log{m}$.} They show this for normalized points with norm 1. In our case (see the proof of Theorem \ref{thm:conv}), the max margin after normalizing the points to have norm 1, is $\frac{1}{\sqrt{d}}$. Thus, under their assumptions, after $O(d)$ iterations we converge to a solution whose margin is a $\frac{1}{2}$-multiplicative approximation of the max margin. Therefore, we obtain for this solution, up to a constant, the same generalization guarantees as the max margin classifier (which we provide in the theorem).

\section{Proof of Lemma \ref{LEM:INIT_GOOD_FILTERS}}
\label{sec:thm_part1}

By definition of the initialization we have $\prob\left(i \in \cA^+\right) = \frac{1}{2}$. Furthermore, we have that $\prob\left(i \in \cW^+_0\right) = \frac{\left(1-2^{-d+1}\right)}{d-1}$. This follows, since with probability $2^{-d+1}$, for all $\oo \in \cO\setminus\{2\}$, $\wvec{0}_i \cdot \oo \le 0$. On the other hand, with probability $\left(1-2^{-d+1}\right)$, there exists at least one $\oo \in \cO \setminus \{2\}$ such that $\wvec{0}_i \cdot \oo > 0$. Assume we condition on the latter event. Then, we get by symmetry that $\oo_1$ maximizes the dot product with $\wvec{0}_i$, among patterns in $\cO \setminus \{2\}$, with probability $\frac{1}{d-1}$. 

By independence of $W_0$ and $\aai{0}$, we have: $\prob\left(i \in \cW^+_0\cap \cA^+\right) = \frac{\left(1-2^{-d+1}\right)}{2(d-1)}$. Then, by Hoeffding's inequality we get:
\begin{align*}
    \prob\left(\left|\frac{\left|\cW^+_0\cap \cA^+\right|}{k} - \frac{\left(1-2^{-d+1}\right)}{2(d-1)}\right| > \frac{1}{4d}\right) &\le 2e^{-2k\left(\frac{1}{4d}\right)^2} \\ &\le 2e^{-d} \numberthis
\end{align*}
where in the last inequality we used the assumption on $k$.  Since $\frac{\left(1-2^{-d+1}\right)}{2(d-1)} \ge \frac{1}{2d}$ and $\frac{\left(1-2^{-d+1}\right)}{2(d-1)} \le \frac{1}{d}$ for $d \ge 3$, we get that with probability at least $1-2e^{-d}$, $\left|\cW^+_0\cap \cA^+\right| \ge \frac{\left(1-2^{-d+1}\right)k}{2(d-1)} - \frac{k}{4d} \ge \frac{k}{4d}$ and $\left|\cW^+_0\cap \cA^+\right| \le \frac{\left(1-2^{-d+1}\right)k}{2(d-1)} + \frac{k}{4d} \le \frac{k}{d}$. By the symmetry of our problem and definitions of the sets $\cW^+_0$, $\cW^-_0$, $\cA^+$, $\cA^-$, we similarly get that with probability at least $1-2e^{-d}$, $\frac{k}{4d} \le \left|\cW^-_0\cap \cA^-\right| \le \frac{k}{d}$. Applying the union bound concludes the proof.

\section{Proof of Lemma \ref{LEM:DYNAMICS_POS} }
\label{sec:proof_dynamics_first_layer}

We first prove the following two auxiliary lemmas.
\begin{lem}
\label{lem:filter_norm}
For all $0 \le t \le T_1$ and all $1 \le i \le k$, $\left\|\wvec{t}_i\right\| \le \eta_1(t+1)$. 
\end{lem}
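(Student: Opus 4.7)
The strategy is a straightforward induction on $t$, combined with a uniform bound on the per-iteration gradient norm. Concretely, the aim is to show that $\|\wvec{t}_i\| \le \|\wvec{t-1}_i\| + \eta_1$ for every $t \ge 1$, and then chain these inequalities starting from the initialization.

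The base case uses the assumption $\|\wvec{0}_i\| = r \le \eta_1/200 \le \eta_1$, giving $\|\wvec{0}_i\| \le \eta_1(0+1) = \eta_1$ at once. For the inductive step I would write out the gradient of $\cL_1$ with respect to the $i$-th row. Since $\cL_1(\thth) = \frac{1}{m_1}\sum_{(\xx,y)\in S_1}\ell(y\cnn(\xx))$ with $\ell(z)=\log(1+e^{-z})$, the chain rule (together with the identity \eqref{eq:max_pattern}) gives
\begin{equation*}
\frac{\partial \cL_1}{\partial \ww_i}(\wmat{t-1},\aai{0}) \;=\; \frac{1}{m_1}\sum_{(\xx,y)\in S_1} \ell'\!\left(y\cnn(\xx)\right)\, y\, \ai{0}_i\, \pwvec{t-1}_i(\xx).
\end{equation*}
(At points where the maximizing $j$ is not unique or the ReLU argument vanishes, the same formula holds for an appropriate subgradient choice, which is what the algorithm uses.)

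I then bound each factor separately by a constant of order one: $|\ell'(z)| = \tfrac{1}{1+e^{z}} \le 1$, $|\ai{0}_i| = 1$ by the $\{\pm 1\}$ initialization, and $\|\pwvec{t-1}_i(\xx)\| \le 1$ because $\pwvec{t-1}_i(\xx)$ is either some $\oo_j \in \cO$ (which has unit norm by assumption) or the zero vector. Taking norms inside the sum via the triangle inequality yields
\begin{equation*}
\left\|\frac{\partial \cL_1}{\partial \ww_i}(\wmat{t-1},\aai{0})\right\| \;\le\; \frac{1}{m_1}\sum_{(\xx,y)\in S_1} 1 \cdot 1 \cdot 1 \;=\; 1.
\end{equation*}
Combining this with the update rule $\wvec{t}_i = \wvec{t-1}_i - \eta_1\frac{\partial \cL_1}{\partial \ww_i}(\wmat{t-1},\aai{0})$ and the triangle inequality gives $\|\wvec{t}_i\| \le \|\wvec{t-1}_i\| + \eta_1$, and the inductive hypothesis $\|\wvec{t-1}_i\| \le \eta_1 t$ then closes the induction.

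There is no real obstacle here: the only subtlety is handling the non-differentiable points of the ReLU and of the max-pooling, but this is resolved by invoking the standard subgradient selection (which is precisely how $\jit{t}_i(\xx)$ and $\pwvec{t}_i(\xx)$ are defined in the paper). The bound is crude on purpose; its only role is to supply the norm control needed by later parts of the analysis (for instance, to translate products like $\wvec{t}_i\cdot \xx[j]$ into quantities of order $\eta_1 t$, as used in Lemma \ref{lem:second_layer__features_norm}).
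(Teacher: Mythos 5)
Your proof is correct and matches the paper's argument: both bound the per-row gradient norm by $1$ (using that each pattern has unit norm, $|\ell'|\le 1$, and $|\ai{0}_i|=1$) and then accumulate $\eta_1$ per step from the initialization $r\le \eta_1$. The paper states this slightly more directly as $\|\wvec{t}_i\|\le r+\eta_1 t\le \eta_1(t+1)$ rather than as an explicit induction, but the content is identical.
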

\begin{proof}
First we notice that for all $1 \le i \le k$,  $\left\|\frac{\partial \cL_1}{\partial \ww_i}\left(W,\aai{0}\right)\right\| \le 1$.  This follows since for all $1 \le j \le n$ and all $\xx \in S_1$, $\left\|\xx[j]\right\| = 1$ (recall that $\left\|\oo\right\|=1$ for $\oo \in \cO$).

Therefore, for all $0 \le t \le T_1$ and $1 \le i \le k$, $\left\|\wvec{t}_i\right\| \le r + \eta_1 t \le  \eta_1(t+1)$. 
\end{proof}

\begin{lem}
\label{lem:net_bound}
For all $\xx \in S_1$ and $0 \le t \le T_1$ $\left|\cnnth{(\wmat{t},\ai{0})}(\xx)\right|\le \frac{1}{2}$.
\end{lem}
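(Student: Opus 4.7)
The plan is to bound each channel's contribution to $\cnnth{(\wmat{t},\ai{0})}(\xx)$ separately and then sum. Since $\ai{0}_i \in \{\pm 1\}$, the triangle inequality gives
\[
\left|\cnnth{(\wmat{t},\ai{0})}(\xx)\right| \le \sum_{i=1}^{k} \max_j \sigma\!\left(\wvec{t}_i \cdot \xx[j]\right).
\]
I would first note that ReLU is $1$-Lipschitz and nonnegative, so each max over $j$ is bounded by $\max_j |\wvec{t}_i \cdot \xx[j]|$. Since every pattern $\xx[j]$ is a unit vector $\oo \in \cO$ (recall $\|\oo\|=1$), Cauchy--Schwarz yields $|\wvec{t}_i \cdot \xx[j]| \le \|\wvec{t}_i\|$.

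Next, I would invoke the already-proved Lemma \ref{lem:filter_norm}, which gives $\|\wvec{t}_i\| \le \eta_1(t+1)$ uniformly in $i$. Substituting this in and using $t \le T_1$ produces
\[
\left|\cnnth{(\wmat{t},\ai{0})}(\xx)\right| \le k\,\eta_1(t+1) \le k\,\eta_1(T_1+1).
\]
Finally, I would plug in the assumption on the step size, $\eta_1 \le \tfrac{1}{4k(T_1+1)}$, to conclude $k\eta_1(T_1+1) \le \tfrac{1}{4} \le \tfrac{1}{2}$, which is the desired bound.

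No real obstacle is expected here: this is a routine consequence of the filter-norm bound in Lemma \ref{lem:filter_norm} combined with the step-size hypothesis. The factor $\tfrac{1}{2}$ (rather than the tighter $\tfrac{1}{4}$ we actually obtain) is presumably chosen to leave slack for later inductive or loss-derivative arguments, so no sharper bookkeeping is needed at this stage.
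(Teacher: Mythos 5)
Your proof is correct and follows essentially the same route as the paper's: triangle inequality over channels (using $|\ai{0}_i|=1$), Cauchy--Schwarz with the unit-norm patterns, the filter-norm bound of Lemma \ref{lem:filter_norm}, and the step-size assumption $\eta_1 \le \frac{1}{4k(T_1+1)}$. Your observation that the argument actually yields $\frac{1}{4}$, leaving slack relative to the stated $\frac{1}{2}$, is also consistent with the paper.
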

\begin{proof}
By Lemma \ref{lem:filter_norm} we have for all $\xx \in S_1$:

\begin{align*}
    \left|\cnnth{(\wmat{t},\ai{0})}(\xx)\right| &= \left|\sum_{i=1}^{k}\ai{0}_i\Big[\max_j\left\{\sigma\left(\wvec{t}_i\cdot \xx[j] \right)\right\}\Big]\right|\\ &\le k\max_{1 \le i \le k}\left\|\wvec{t}_i\right\|\max_{1 \le j \le n}\left\|\xx[j]\right\| \\
    &\le k\eta_1(t+1) \\ &\le \frac{1}{2}
\end{align*}

where the last inequality follows by the assumption on $\eta_1$.
\end{proof}

Lemma \ref{LEM:DYNAMICS_POS} follows by the following lemma.



\begin{lem}
With probability at least $1-4e^{-\frac{m}{36}}$, for all $0 \le t \le T_1$ and all $i \in \cW^+_0 \cap \cA^+$ the following holds:
\begin{enumerate}
    \item $\oo_1 \cdot \wvec{t}_i \ge \frac{t\eta_1}{9}$.
    \item For all $j \neq 1$, it holds that $\oo_j \cdot \wvec{t}_i \le r$.
\end{enumerate}
\end{lem}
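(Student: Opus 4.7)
The plan is a joint induction on $t$ of three invariants for every fixed $i \in \cW^+_0 \cap \cA^+$: (1) $\oo_1 \cdot \wvec{t}_i \ge t\eta_1/9$, (2) $\oo_j \cdot \wvec{t}_i \le r$ for every $j \ne 1$, and an auxiliary (3) $\pwvec{t}_i(\xx_+) = \oo_1$ for every $\xx_+ \in S_1^+$. Invariant (3) is not listed in the statement of this lemma, but it is needed both to propagate the induction and to deliver the third conclusion of Lemma \ref{LEM:DYNAMICS_POS}. The base case at $t=0$ is immediate: (1) holds because $i \in \cW^+_0$ forces $\oo_1 \cdot \wvec{0}_i > 0$; (2) holds by Cauchy--Schwarz since $\|\wvec{0}_i\| = r$ and $\|\oo_j\|=1$; (3) holds because every pattern of a positive sample lies in $\cO\setminus\{\oo_2\}$ and $\oo_1$ is by definition of $\cW^+_0$ the argmax of $\wvec{0}_i$ over this set, with strictly positive value.

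For the inductive step, the chain rule together with $a_i^{(0)} = 1$ and the identity \eqref{eq:max_pattern} yield the update
\[
\wvec{t+1}_i = \wvec{t}_i + \frac{\eta_1}{m_1}\sum_{(\xx,y)\in S_1} q_t(\xx,y)\, y\, \pwvec{t}_i(\xx),
\]
where $q_t(\xx,y) = (1+e^{y\cnn(\xx)})^{-1}$; by Lemma \ref{lem:net_bound}, $q_t(\xx,y) \ge (1+e^{1/2})^{-1} \ge 1/3$ for every $(\xx,y)\in S_1$. Projecting onto $\oo_1$: each negative sample contributes zero, since $\pwvec{t}_i(\xx_-) \in \{0\}\cup(\cO\setminus\{\oo_1\})$ is orthogonal to $\oo_1$, while each positive sample contributes $+q_t(\xx_+,1)$ by invariant (3). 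A Hoeffding bound then gives $|S_1^+|/m_1 \ge 1/3$ with probability at least $1-e^{-m_1/18} \ge 1-e^{-m/36}$, so the increment to $\oo_1 \cdot \wvec{t}_i$ is at least $\eta_1\cdot(1/3)\cdot(1/3) = \eta_1/9$, which gives (1) at time $t+1$.

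For $\oo_j$ with $j \ne 1$, positive samples contribute zero by (3) together with orthogonality, and each negative sample contributes $-q_t(\xx_-,-1)\cdot \mathbb{1}[\pwvec{t}_i(\xx_-)=\oo_j] \le 0$; hence $\oo_j \cdot \wvec{t+1}_i \le \oo_j \cdot \wvec{t}_i \le r$, which is (2). Invariant (3) then propagates: for any $\xx_+$, (1) gives $\oo_1 \cdot \wvec{t+1}_i \ge (t+1)\eta_1/9 \ge \eta_1/9 > r$, while every other pattern in $\xx_+$ has inner product at most $r$ by (2), where the strict inequality uses the assumption $r \le \eta_1/200$. The main obstacle is precisely this coupling of the three invariants: the gradient carries $\pwvec{t}_i(\xx)$ inside the sum, and only (3) lets us identify this with $\oo_1$ for positive samples. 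What makes the bookkeeping work cleanly is the structural fact that $\pwvec{t}_i(\xx) \in \cO\cup\{0\}$ combined with the orthogonality of $\cO$, which decouples the $\oo_1$-projection (positive samples contribute, negatives cancel) from the $\oo_j$-projections (positives vanish, negatives only subtract). The factor $4$ in $4e^{-m/36}$ absorbs a union bound over the tail events used here together with their symmetric counterparts needed in Corollary \ref{cor:neg}.
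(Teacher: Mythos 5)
Your proof is correct and takes essentially the same route as the paper's: the same three-part induction on $t$ (augmented with the auxiliary invariant $\pwvec{t}_i(\xx_+)=\oo_1$ for $\xx_+\in S_1^+$), the same use of the bound $\left|\cnn(\xx)\right|\le \frac{1}{2}$ to lower-bound the loss derivative by $\frac{1}{3}$, the same Hoeffding event on $\left|S_1^+\right|/m_1$, and the same orthogonality argument decoupling the $\oo_1$-projection (driven up by positive samples) from the $\oo_j$-projections (non-increasing). The only differences are cosmetic, e.g.\ writing the gradient update as a single signed sum rather than splitting it into positive and negative parts.
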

\begin{proof}
We will prove the claim for $i \in \cW^+_0 \cap \cA^+$.  We prove the two claims by induction on $t$. In the proof by induction we also show a third claim that: for all $\xx_+ \in S_1^+$,  $\pwvec{i}_t(\xx_+) = \oo_1$.

For the proof, we condition on the event:
\begin{equation}
\label{eq:pos_neg}
\frac{\left|S_1^+\right|}{m_1},\frac{\left|S_1^-\right|}{m_1} \ge \frac{m_1}{3}    
\end{equation}
This holds with probability at least $1-4e^{-\frac{m}{36}}$ 
by applying Hoeffding's inequality and a union bound (over positive and negative samples).

For $t=0$, we have by definition for all $i \in \cW^+_0 \cap \cA^+$, $\oo_1 \cdot \wvec{t}_i > 0$. The second claim holds by the definition of the initialization. The third claim follows by the definition of $\cW^+_0 \cap \cA^+$.

Assume the three claims above hold for $t=T$. We will prove them for $t=T+1$. 

\underline{Proof of Claim 1.} 
By the gradient update in the first layer, the following holds for $i \in \cW^+_{0} \cap \cA^+$:
\begin{align*}
\label{eq:gd_update_first_layer}
    \wvec{T+1}_{i} &= \wvec{T}_{i} - \frac{\eta_1}{m_1} \sum_{\xx_+\in S_1^+}\ell'\left(\cnnth{(\wmat{T},\ai{0})}(\xx_+)\right)\pwvec{i}_T(\xx_+) \\ &+ \frac{\eta_1}{m_1}\sum_{\xx_-\in S_1^-}\ell'\left(-\cnnth{(\wmat{T},\ai{0})}(\xx_-)\right)\pwvec{i}_T(\xx_-) \numberthis
\end{align*}

where $l'(z) = -\frac{1}{1+e^z}$ is the derivative of the logistic loss. Note that for all $z$, $\left|\ell'(z)\right| \le 1$. 
Therefore, for all $\xx_- \in S_1^-$, we have:
\begin{equation}
\label{eq:derivative_neg}
    \left|\ell'\left(-\cnnth{(\wmat{T},\ai{0})}(\xx_-)\right)\right| \le 1
\end{equation}
By Lemma \ref{lem:net_bound} we have for all $\xx \in S_1$ $\left|\cnnth{(\wmat{t},\ai{0})}(\xx)\right| \le \frac{1}{2}$. Therefore, for all $\xx_+ \in S_1^+$:
\begin{equation}
\label{eq:derivative_pos}
\left|\ell'\left(\cnnth{(\wmat{T},\ai{0})}(\xx_+)\right)\right| \ge \frac{1}{1+\sqrt{e}} \ge \frac{1}{3}
\end{equation}

By the induction hypothesis, we have for $i \in \cW^+_0 \cap \cA^+$ and all $\xx_+ \in S_1^+$ that $\pwvec{i}_T(\xx_+) = \oo_1$. Therefore we have: 
\begin{equation}
\label{eq:dot_pos}
\pwvec{i}_T(\xx_+) \cdot \oo_1 = 1
\end{equation}

For all $\xx_- \in S_1^-$, we have $\pwvec{i}_T(\xx_-) = \oo_j$ for $j \neq  1$ that depends on $\xx_-$. Therefore:
\begin{equation}
\label{eq:dot_neg}
\pwvec{i}_T(\xx_-) \cdot \oo_1 = 0    
\end{equation}

By the facts above we complete the proof of the first claim:

\begin{align*}
\label{eq:disc}
\wvec{T+1}_{i} \cdot \oo_1 &\underset{\text{Eq. } \ref{eq:gd_update_first_layer},\ref{eq:derivative_neg},\ref{eq:derivative_pos}}{\ge} \wvec{T}_{i} \cdot \oo_1  + \frac{\eta_1}{3m_1}\sum_{\xx_+\in S_1^+}\pwvec{i}_T(\xx_+)\cdot \oo_1 \\ &-\frac{\eta_1}{m_1}\sum_{\xx_-\in S_1^-}\pwvec{i}_T(\xx_-)\cdot \oo_1 \\ &\underset{\text{Eq. } \ref{eq:pos_neg},\ref{eq:dot_pos},\ref{eq:dot_neg}}{\ge} \wvec{T}_{i} \cdot \oo_1 + \frac{\eta_1}{9} \\ &\ge \frac{(T+1)\eta_1}{9} \numberthis
\end{align*}
where the last inequality follows from the induction hypothesis.

\underline{Proof of Claim 2.} Since for all $\xx_+ \in S_1^+$, $\pwvec{i}_T(\xx_+) = \oo_1$ we have for all $1 \le j \le d$, $j \neq 1$: 
\begin{equation}
\label{eq:dot_pos_claim2}
\pwvec{i}_T(\xx_+) \cdot \oo_j = 0
\end{equation}

By the facts (1) for all $\xx_- \in S_1^-$ and $j \neq 1$ it holds that $\pwvec{i}_T(\xx_-) \cdot \oo_j \ge 0$ and (2) $l'(z) < 0$ for all $z$, we have:
\begin{equation}
\label{eq:dot_neg_claim2}
\frac{\eta_1}{m_1}\sum_{\xx_-\in S_1^-}\ell'\left(-\cnnth{(\wmat{T},\ai{0})}(\xx_-)\right)\pwvec{i}_T(\xx) \cdot \oo_j \le 0 
\end{equation}

Therefore we have for $j \neq 1$:
\begin{align*}
\label{eq:spur}
\wvec{T+1}_{i} \cdot \oo_j &\underset{\text{Eq.} \ref{eq:dot_pos_claim2},\ref{eq:dot_neg_claim2}}{\le} \wvec{T}_{i} \cdot \oo_j \le r \numberthis
\end{align*}
where the right inequality follows by the induction hypothesis.

\underline{Proof of Claim 3.} Since $r < \frac{\eta_1(T+1)}{9}$ we conclude by \eqref{eq:disc} and \eqref{eq:spur} that for all $\xx_+ \in S_1^+$,  $\pwvec{i}_{T+1}(\xx_+) = \oo_1$.
\end{proof}

\section{Proof of Lemma 5.5}
\label{sec:proof_simple_bound}
By Lemma \ref{lem:filter_norm}, for all $1 \le t \le T_1$ and $1 \le i \le k$, $\left\|\wvec{t}_i\right\| \le  \eta_1(t+1)$. Therefore, for all  $1 \le j \le d$ and $\xx$ sampled from $\cD$, $\xx[j] \cdot \wvec{t}_i \le 2 \eta_1 t$.

\section{Proof of Part 3 of Theorem 5.1}
\label{sec:proof_part3}

Here we condition on the events of previous lemmas which hold with probability at least $1- 4e^{-d} - 4e^{-\frac{m}{36}}$. For each $\xx$ sampled from $\cD$, define $\zz(\xx) \in \reals^k$ such that for all $1 \le i \le k$, its $i$th entry is $\zz(\xx)_i = \max_j\left\{\sigma\left(\wvec{T_1}_i\cdot \xx[j] \right)\right\}$. Notice that by \eqref{eq:max_pattern} we have $\zz(\xx)_i = \wvec{T_1}_i \cdot \pwvec{T_1}_i(\xx)$. Define a new distribution of points $\cD_{\zz}$ over $\reals^k \times \{\pm 1\}$, which samples a point $(\zz(\xx),y)$ where $(\xx,y) \sim \cD$.

Our goal is to show that $\cD_{\zz}$ is linearly separable and can be separated with a classifier of relatively low norm. Then, we will use recent results on logistic regression, which show that GD converges to low norm solutions. Therefore, by optimizing the second layer, $\layeralg$ will converge to a low norm solution. Finally, we will apply norm-based generalization bounds to obtain a generalization guarantee for $\layeralg$.

First we will show that $\cD_{\zz}$ is linearly separable. Indeed define $\vv^* \in \reals^k$ as follows. For $i \in \cW^+_{0} \cap \cA^+$ let $\vv^*_i = \frac{80d}{k \eta_1 T_1}$ and for $i \in \cW^-_{0} \cap \cA^-$ let $\vv^*_i = -\frac{80d}{k \eta_1 T_1}$. Set all other entries of $\vv^*$ to 0. Then for any $\zz(\xx_+)$  such that $(\xx_+,1) \sim \cD$, we have:
\begin{align*}
\zz(\xx_+) \cdot \vv^* &= \frac{80d}{k \eta_1 T_1}\sum_{i \in \cW^+_{0} \cap \cA^+}\wvec{T_1}_i \cdot \pwvec{T_1}_i(\xx_+) \\ &-\frac{80d}{k \eta_1 T_1}\sum_{i \in \cW^-_{0} \cap \cA^-}\wvec{T_1}_i \cdot \pwvec{T_1}_i(\xx_+) \\ &> \left(\frac{80d}{k \eta_1 T_1}\right) \left(\frac{k}{4d}\right)\left(\frac{\eta_1  T_1}{10}\right) \\ &- \left(\frac{80d}{k \eta_1 T_1}\right)\left(\frac{k}{d}\right)\left(\frac{\eta_1 T_1}{80}\right) \\ &= 1
\end{align*}
where the inequality follows by Lemma \ref{LEM:INIT_GOOD_FILTERS}, Lemma \ref{LEM:DYNAMICS_POS} and Corollary \ref{cor:neg}. By symmetry, we have $-\zz(\xx_-) \cdot \vv^* > 1$ for all $(\xx_-,-1) \sim \cD$.

Next, we proceed to apply Theorem 3 in \citet{soudry2018implicit}. It requires that $\eta_2 < 2\beta^{-1}\sigma_{\max}^{-2}\left(Z\right)m_2$,\footnote{We added the factor $m_2$ because \citet{soudry2018implicit} consider the empirical loss without dividing by the number of samples.} where $\beta$ is the smoothness parameter of the logistic loss, $Z \in \reals^{k \times m_2}$ is the matrix which contains $\zz(\xx_{i + \lceil\frac{m}{2}\rceil})$ in its $i$th column and $\sigma_{\max}(Z)$ is the maximum singular value of $Z$. In our setting, $\beta = 1$ and by Lemma \ref{lem:second_layer__features_norm} $\sigma_{\max}^2(Z) \le \left\|Z\right\|_F^2 \le 4m_2k\eta_1^2T_1^2 \le \frac{m_2}{4k}$. Thus, by our assumption $\eta_2 < 8k \le 2\sigma_{\max}^{-2}\left(Z\right)m_2$ holds.


Therefore, by this theorem we are guaranteed that:
\begin{equation}
    \lim_{t \rightarrow \infty }\frac{\aai{t}}{\left\|\aai{t}\right\|} = \frac{\hat{\aa}}{\left\|\hat{\aa}\right\|}
\end{equation}

where
\begin{equation}
    \hat{\aa} = \argmin_{\vv \in \reals^k}\left\|\vv\right\|^2\,\,\text{s.t.}\,\,\,\,\forall i \,\,\,\,\,y_i\vv \cdot \zz(\xx_i) \ge 1
\end{equation}

Specifically, gradient descent converges to zero training loss, i.e., $\lim_{T_2 \rightarrow \infty}\cL_2\left(\left(W_{T_1},\aa_{T_2}\right)\right) = 0$.

By optimality of $\hat{\aa}$ and Lemma \ref{LEM:INIT_GOOD_FILTERS} we have $\left\|\hat{\aa}\right\|^2 \le \left\|\vv^*\right\|^2 \le \frac{80^2 d^2}{k^2 \eta_1^2 T_1^2} \frac{2k}{d} = \frac{2 \cdot 80^2 d}{k \eta_1^2 T_1^2}$. Furthermore, $\left\|\zz(\xx)\right\|^2 \le 4k\eta_1^2 T_1^2$ by Lemma \ref{lem:second_layer__features_norm}. Therefore, we have $\left\|\hat{\aa}\right\|^2 \left\|\zz(\xx)\right\|^2 = O(d)$. Thus, by a standard margin generalization bound (e.g. Theorem 26.13 in \citet{shalev2014understanding} or \citet{bartlett2002rademacher}) we have with probability at least $1-\delta$:
\begin{align*}
    &\lim_{T_2 \rightarrow \infty}\prob_{(\xx,y) \sim \cD}\left(\textup{sign}\left(\cnnth{\left(\wmat{T_1},\aai{T_2}\right)}(\xx)\right)\neq y\right) \\ &= \prob_{(\xx,y) \sim \cD}\left(\textup{sign}\left(\cnnth{\left(\wmat{T_1},\frac{\hat{\aa}}{\left\|\hat{\aa}\right\|}\right)}(\xx)\right)\neq y\right) \\ &= O\left(\sqrt{\frac{d}{m}}\right)
\end{align*}
where $O$ hides an additive term which depends on $\delta$.

\ignore{
\section{Experimental Details of Section \ref{sec:mnist}}
\label{sec:mnist_details}
Here we provide details of the experiments performed in Section \ref{sec:mnist}. All experiments were run on NVidia Titan Xp GPUs with 12GB of memory. Training algorithms were implemented in PyTorch. All of the empirical results can be replicated in approximately one hour on a single Nvidia Titan Xp GPU.

We now describe how we created train and test sets for our setting. For train we sampled digits from the original MNIST training set and for test we sampled digits from the original MNIST test set. To sample a data point, we randomly sampled a label $y \in \{\pm 1\}$. 
Then, if $y=1$ we randomly sampled 9 MNIST digits (either from the MNIST train or test set). Then randomly chose 8 of them to be the color green and one of them to be the color blue. If $y=-1$, we do the same procedure with blue replaced by red.

For training we implemented the setting in Section \ref{sec:problem_formulation}. Specifically, here we have $n=9$ and $d = 28*28*3 = 2352$. We trained the network in \eqref{eq:network} with $k=20$ for training set sizes $m=6$, $m=20$ and $m=1000$. For each training set size we performed 10 different experiments with different sampled training set and initialization. We ran SGD with batch size $\min\{10,m\}$,  learning rate $0.0001$ and for 200 epochs. We report the test accuracy and train accuracy in the final epoch (200). In all runs SGD gets $100\%$ train accuracy. For $m=6$ the mean test accuracy is 88.09\% with standard deviation $12.7$, for $m=20$ the mean test accuracy is 99.84\% with standard deviation $0.35$ and for $m=1000$ the mean test accuracy is 100\% with standard deviation $0$.

Figure \ref{fig:filter6}, Figure \ref{fig:filter20} and Figure \ref{fig:filter1000} show the set of all filters in the experiments reported in Figure \ref{fig:mnist_filters}, for $m=6$, $m=20$ and $m=1000$, respectively. To plot the figures, for each entry of the filter $x$ we calculated $\max\{0,x\}$. We scaled the weights of the network to be with values between 0 to 255 by dividing all entries by the maximum entry across all parameters of the network (after performing $\max\{0,x\}$) and multiplying by 255.
}

\begin{figure}[h]
    \centering
		\includegraphics[width=0.8\linewidth]{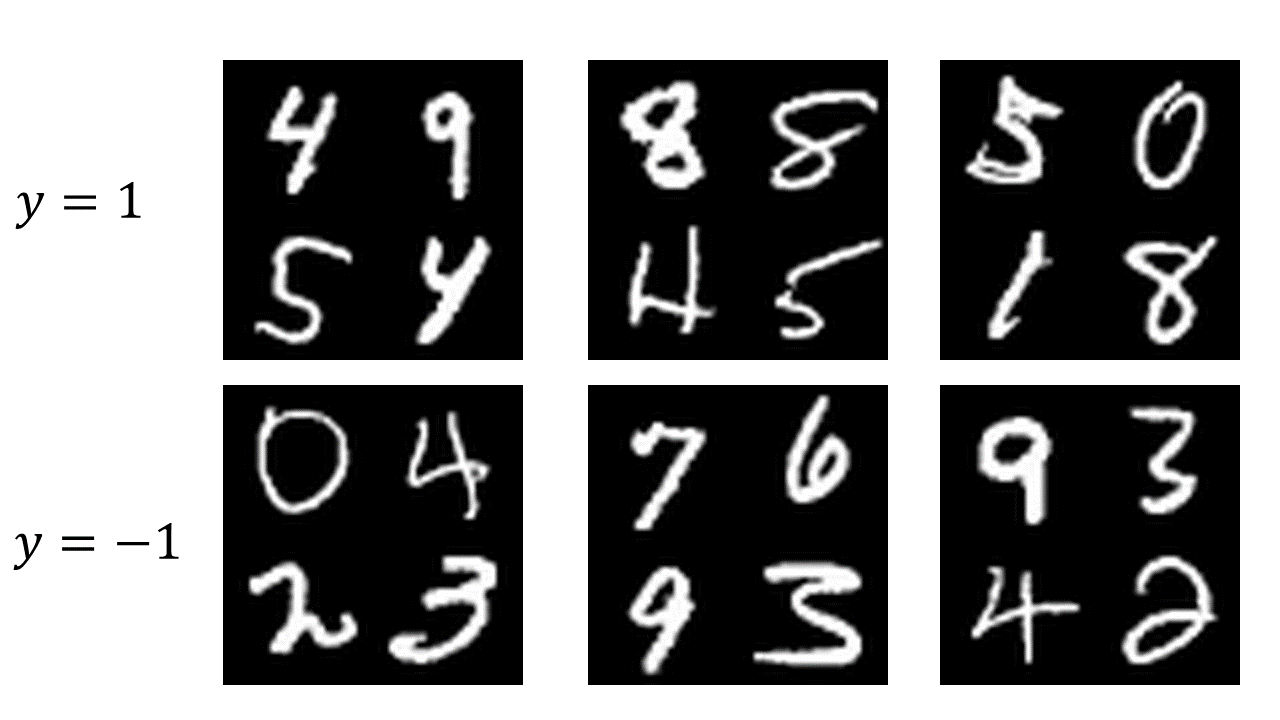}
	\caption{Data examples in the MNIST detection problem we experiment with in Section \ref{sec:exps}.}
	\label{fig:mnist}
\end{figure}



\end{document}